\newcommand{\R}[0]{\mathbb{R}}
\newcommand{\E}[0]{\mathbb{E}}
\newcommand{\norm}[1]{\left\lVert#1\right\rVert}
\newcommand{\paren}[1]{\left( #1 \right)}
\newtheorem{prop}{Proposition}
\newtheorem{assumption_}{Assumption}
\definecolor{main}{HTML}{5989cf}    
\definecolor{sub}{HTML}{cde4ff}     
\newtcolorbox{nicebox}{
    sharpish corners, 
    boxrule = 0pt,
    toprule = 4.5pt, 
    enhanced,
    fuzzy shadow = {0pt}{-2pt}{-0.5pt}{0.5pt}{black!35} 
}
\def\phcomments{1}
\newcounter{note}[section]
\renewcommand{\thenote}{\thesection.\arabic{note}}
\newcommand{\note}[2]{\refstepcounter{note}\marginpar{\small\bf \textcolor{red}{#1~\thenote}}$\ll${\sf \textcolor{red}{#1's
 Comment~\thenote:}} {\em \textcolor{red}{#2}}$\gg$}
\newcommand{\notePH}[1]{\note{PH}{#1}}
\newcommand{\notePH}[1]{}
\newcommand{\noteGP}[1]{\note{GP}{#1}}
\newcommand{\noteGP}[1]{}
\begin{document}


\RUNAUTHOR{Cristian et al.}

\RUNTITLE{Meta-optimization for efficient end-to-end learning}

\TITLE{Efficient End-to-End Learning for Decision-Making: A Meta-Optimization Approach}

\ARTICLEAUTHORS{%
\AUTHOR{Rares Cristian}
\AFF{Massachusetts Institute of Technology, \EMAIL{raresc@mit.edu}} 
\AUTHOR{Pavithra Harsha}
\AFF{IBM Research \EMAIL{pharsha@us.ibm.com}}
\AUTHOR{Georgia Perakis}
\AFF{Massachusetts Institute of Technology \EMAIL{georgiap@mit.edu}}
\AUTHOR{Brian Quanz}
\AFF{IBM Research \EMAIL{blquanz@us.ibm.com}}
} 

\ABSTRACT{%
{\color{black}
End-to-end learning has become a widely applicable and studied problem in training predictive ML models in order to be aware of their impact on downstream decision-making  problems. These end-to-end models often outperform traditional methods that separate the prediction from the optimization steps and only myopically focus on prediction error. However, the computational complexity of end-to-end frameworks poses a significant challenge, particularly for large-scale problems. This is because while training an ML model using gradient descent, each time we need to compute a gradient we must solve an expensive optimization problem. We present a meta-optimization method that learns efficient algorithms to approximate optimization problems, dramatically reducing computational overhead of solving the decision problem in general, an aspect we leverage in the training within the end-to-end framework. Our approach introduces a neural network architecture that near-optimally solves optimization problems while ensuring feasibility constraints through alternate projections. We prove exponential convergence, approximation guarantees, and generalization bounds for our learning method. This method offers superior computational efficiency, producing high-quality approximations faster and scaling better with problem size compared to existing techniques. Our approach applies to a wide range of optimization problems including
deterministic, single-stage as well as two-stage stochastic optimization problems. We illustrate how our proposed method applies to (1) an electricity generation problem using real data from an electricity routing company coordinating the movement of electricity throughout 13 states, (2) a shortest path problem with a computer vision task of predicting edge costs from terrain maps, (3) a two-stage multi-warehouse cross-fulfillment newsvendor problem, as well as a variety of other newsvendor-like problems.}
}

\maketitle

\section{Introduction}


In many operations management problems, several input quantities 
need to be predicted from historical data prior to determining the corresponding best operational decision. Examples include predicting travel times in a vehicle routing problem, the demand distribution in a supply chain inventory optimization problem, among many others. A popular approach is to estimate these quantities using a machine learning model from historical data along with observed contextual features such as seasonal trends, location and price information among others. This machine learning model is used to create a forecast for a new observation which is subsequently used for decision making. In these problems, the quality of the resultant business objectives, such as the overall costs in the supply chain problem, is often the more important performance indicator, compared to the accuracy of the machine learning models, such as the mean square error of the demand estimate. 

For example, consider the classical newsvendor problem in which one needs to forecast future demand. There is a holding cost associated with stocking more than the demand, and a backorder cost associated with each unit stocked below demand. Given perfect information about the distribution, the optimal stocking quantity is known to be a particular quantile dependent on the two costs (see \citealt{arrow1951optimal}). Nevertheless, such closed form solutions are not common in most application settings. For example, once there are multiple products with constraints on the stocking allocation, the optimal solution has no longer a closed form. A predict-then-optimize type of approach may take the following steps. (1) Assume the demand distribution comes from some family of distributions (such as Gaussian) and make a prediction using the mean and variance estimated from the data. Subsequently, one can solve the corresponding stochastic optimization problem. Alternatively (2) one may explicitly learn a distributional forecast (for example, learn the probability of each demand realization) and then subsequently solve the corresponding stochastic  optimization problem to determine the allocation quantity. Clearly such distributional forecasts are necessary since using only a point forecast one cannot a priori know the correct statistic of the demand distribution to target. Indeed, simply making point forecasts by minimizing the mean-squared error between the prediction and the observed uncertainty has the potential to produce poor decisions as discussed for example in \citet{cameron2021perils}.

However, joint prediction and optimization, (also referred to as end-to-end learning), bypasses this issue. The primary goal in this case is to learn a forecast whose corresponding decision has the lowest possible cost (objective function). This goal is exactly incorporated into the training algorithm used to learn the forecasting model.
Consider the newsvendor problem as an example. In this problem one needs to be able to implicitly learn the best forecast to make for  the ``correct quantile'' of the distribution
as that would minimize cost. At the same time, one needs to keep in mind that any model will always make some error. However, not all errors are the same. For example, suppose the backorder cost is twice that of the holding cost. {\color{black} A model only minimizing mean-squared error between its predictions and the target would identify an over- and under-prediction the same. However, a model trained end-to-end would identify the true cost resulting from a prediction, and that over-prediction results in lower cost.}

{\color{black} A significant issue in training end-to-end models comes from the required computational complexity. In particular, the loss function is now the output of a decision-making process (to determine the true cost) which is expensive to evaluate. }
In this paper, we propose a novel approach to end-to-end learning that is more efficient compared to existing methods and achieves similar performance. 
For many classes of these optimization 
problems like linear or quadratic, as well as for stochastic optimization problems like the  resource allocation problems, where there are over and under utilization penalties, we observe that point forecasts are sufficient when using an end-to-end learning approach. Together with the proposed learning method in this paper, we can now efficiently solve computationally intractable stochastic optimization problems 
in an end-to-end sense.








More specifically, this paper presents the following contributions: 

\begin{enumerate}[1)]
\item \textbf{Meta-optimization approach to  End-to-End for Single and Two Stage Problems:} {\color{black} We achieve more efficient end-to-end training by replacing the complex optimization-based loss function with a simpler efficient approximation. We introduce a meta-optimization method to learn better surrogate optimization objectives for which we can compute efficient solutions. }
We refer to this approach as ProjectNet. 
We further incorporate this meta-optimization method into an end-to-end learning framework for solving the single-stage stochastic optimization problem. Furthermore, we present a way to extend our approach to two-stage stochastic optimization problems in section~\ref{section:2-stage}.
\item \textbf{Convergence, Approximation, and Generalization:} {\color{black} We prove bounds on how quickly a sequence generated by the ProjectNet architecture converges as well as bounds on the regret of the solution compared to using the exact original optimization problem. In addition, we also prove generalization bounds on how much data is needed to train the ProjectNet model. }
\item \textbf{Point versus Distributional Forecasts for Stochastic Optimization:} Moreover, we prove that for a large class 
of stochastic optimization problems, if one uses a loss-like objective function, then point forecasts through an end-to-end learning approach produce the same optimal solutions as when making distributional forecasts (the latter is often needed in predict-then-optimize approaches).
\item \textbf{Computational Results on Several Applications:} {\color{black} We present computational results on three problems. (i) An electricity generation and planning problem using real-world data from PJM, an organization coordinating the movement of wholesale electricity around 13 US states. (ii) A shortest path problem requiring sate of the art residual network and convolutional networks to learn from map data. (iii) A two-stage multi-warehouse cross-fulfillment newsvendor problem. We compare our methods along multiple axes. First, performance and accuracy as the amount of training data increases. Second, accuracy as the problem size (number of optimization variables) increases. Third,  running time of our method as problem size  increases. We show that our proposed method produces better (or in the worst case competitive) solutions in terms of average cost, while at the same time the method is computationally faster to train relative to other end-to-end learning methods. That is, the proposed approach consistently scales better in terms of runtime as problem size increases, being 2 to 10 times faster for various problems while retaining the same accuracy. }


We show that our proposed approach is computationally efficient as it allows one to quickly approximate the solution, without explicitly solving the optimization problem itself. When training the predictive forecasting model via gradient descent, existing approaches require solving an optimization problem as a subproblem at each gradient update iteration. In contrast, we only require a simple pass through a neural network.
We illustrate computationally for  a variety of problems that ProjectNet produces better values in terms of the objective function as compared to the projected gradient descent method using the same step size and number of iterations. 
For example, for the matching problem we show computationally that the ProjectNet model produces solutions up to 12.5\% better than gradient descent. 




\end{enumerate}

\subsection{Some Related Literature}

One challenge for a large class of decision making problems (for example, for linear optimization problems) stems from the fact that the gradient of the optimal solution with respect to the predicted quantities, e.g., the cost vector of the decision problem, is zero or undefined. 
Often, one aims to learn the forecasting function by using gradient descent. But if the gradient is zero (as in linear optimization), then end-to-end learning becomes next to impossible to perform. 
The reason why this gradient is zero is because in linear optimization problems, the optimal solution lies in a discrete set of vertices of the feasible region and hence, it is a piece-wise constant function of the cost vector. Nevertheless, linear optimization problems constitute a major class of problems of interest. 
\citet{DOBKIN197996} (among others) have shown that any polynomial-time solvable problem can be formulated as a linear optimization problem. This includes for instance the shortest path problem, maximum matching among many others. Thus, most of the existing literature has focused on the linear optimization case.

\noindent {\bf{End-to-end methods for convex problems.}}
To deal with the lack of differentiability issue for linear optimization problems, \citet{elmachtoub2022smart} construct a convex and differentiable approximation of the objective.  Furthermore, \citet{spotree} extend this framework to training with decision trees, and \citet{liu2022online} extend it to a setting where data and decisions need to be taken online over time.
In the case of unconstrained quadratic objectives, \citet{KAO2009} train a model to directly minimize task loss. 
This work was extended in the OptNet framework of \citet{optnet} to constrained quadratic optimization by analyzing the  optimality (KKT) conditions. In particular, this is performed by calculating the optimal solution (using a traditional solution method) and differentiating through the corresponding optimality (KKT) conditions.
\citet{donti2017task} further applies this methodology to stochastic optimization problems with probabilistic constraints.
Subsequently, \citet{Wilder2019} propose to add a quadratic regularization term to linear optimization problems in order to obtain approximate solutions. 
Furthermore, \citet{agrawal2019differentiable} extend the method of analyzing the optimality (KKT) conditions 
to more general convex optimization problems. For other examples of methods geared towards linear optimization, \citet{Mandi2020} use an interior point method to retrieve the optimal solution and calculate the gradient by differentiating through the log-barrier terms. \citet{Pogancic2020} view the optimization problem as a piece-wise constant function (which happens as we discussed, due to the presence of the zero-gradients) of the cost vector and design a continuous interpolation. \citet{Berthet2020} tackle the problem by adding a stochastic perturbation to the linear objective, producing nonzero gradients of the output. 
Unfortunately, a shortcoming of  these approaches is that they are computationally expensive, as the underlying methods need to solve the nominal decision-making optimization problem (e.g., finding an optimal fulfilment strategy given a demand prediction, or a shortest path problem given a prediction of the  travel times of the edges) at each gradient step.

To remedy this issue, we propose a novel neural network architecture which can learn the optimization problem solution, allowing one to  approximate the solution fast, without explicitly solving the exact optimization problem itself. The main issue though in doing so, arises from ensuring the output of the network is a feasible solution to the optimization problem. To overcome this issue, we use an approximate projection method onto the feasible region after each layer of the neural network. We accomplish this by decomposing the problem into a sequence of projections onto ``simpler'' sets (for example, projecting onto a single constraint).

\noindent {\bf{Learning decisions directly.}}
Other approaches in the literature aim to directly learn a decision function rather a forecast. A significant issue to resolve is ensuring the output satisfies the problem constraints. For example, \citet{Ban2019} consider a newsvendor problem and model the decisions directly as a linear function of the features. In this case, the task of learning can be formulated exactly as a linear optimization problem which can be efficiently solved to optimality.
However, this approach does not allow for more complex decision mappings. Alternatively, \citet{Frerix2019} describe a feasible solution, not by its coordinates, but 
as a convex combination of the vertices and extreme rays describing the feasible region. Unfortunately, the primary downside of this approach lies in the often exponential size of the vertex set. 

Closer to our approach, \citet{Donti2021} transform the output of the learning model into a feasible solution by projecting on the equality constraints, and subsequently performing gradient descent to satisfy the inequality constraints. {\color{black} The method of \cite{qiu2024dual} proposes to ensure feasibility by learn a subset of variables and completing the rest automatically by leveraging fundamental
properties of dual-optimal solutions for their specific use-case of AC power flow problems.}

To ensure feasibility, the method proposed in this paper only performs a sequence of alternating projections onto simpler sets. Additionally, our method trains a surrogate model which explicitly learns and approximates solutions to the optimization problem (this may also be of independent interest). The spirit of the work in \citet{shirobokov2020black} is somewhat similar to ours but within a rather different context. 
That is, \citet{shirobokov2020black} consider simulation problems (a common task within fields in  physics or engineering for instance) rather than optimization problems to solve after the initial forecast. The simulation problems considered in \citet{shirobokov2020black} are often highly expensive to perform, and as a result that paper proposes a surrogate generative network method to approximate the outcome.  There has also been interesting work in creating continuous relaxations of algorithms to make them differentiable. For instance, \citet{petersen2021learning} accomplishes this by introducing continuous relaxations of simple algorithmic concepts such as conditional statements, loops, and indexing, which can be pieced together to describe any algorithm. 

Within the context of inventory optimization problems, the integration of learning and optimization was initially studied for example, in \citet{Ban2019}, \citet{Kallus2020} and \citet{oroojlooyjadid2020applying} for solving the feature-based newsvendor problem. 
For a more complex version of the problem, \citet{qi2020practical} devise an end-to-end method for the multi-period replenishment problem. Unlike the previous  classes of problems we discussed, this is a mixed integer linear optimization problem. \citet{qi2020practical} propose a neural network model which can learn the binary values of which days to make an order for inventory as well as how much to order. The paper does so by pre-calculating the optimal solution (order quantities for each day) to many instances observed in the data. The paper then trains a network to learn the mapping from features to the optimal order quantities.



\noindent {\bf{End-to-end learning for integer problems.}}
Although this is not the focus of our paper, there has also been work in the recent years on end-to-end learning for hard combinatorial problems with integer constraints. In what follows, we only provide as a result a brief discussion on some related literature. \citet{ferber2020mipaal} approach the problem by generating cutting planes, taking the corresponding linear relaxation, and applying the approach of \citet{Wilder2019} to solve the new end-to-end problem. \citet{mandi2020smart} apply the approach of \citet{elmachtoub2022smart} to hard combinatorial problems. \citet{guler2020divide} take a new approach using a divide and conquer algorithm when using a linear model to predict the uncertainty 
This approach is applicable to any optimization problem (with possibly nonlinear constraints) and with uncertainty in a linear objective function. The approach of \citet{paulus2021comboptnet} considers the case of uncertainty in constraints in addition to the objective, but restricts itself to linear constraints.

\noindent {\bf{Learning efficient approximations of hard problems.}}
Another stream of work is ``learning to learn'' methods and meta-learning or meta-optimization (ways to learn algorithms that can solve optimization problems). {\color{black} \cite{li2016learning} learn a policy to solve unconstrained continuous optimization problems. They abstract the notion of an optimization algorithm to be a sequence of updates computed from some function of the objective function, the current location and past locations in the sequence. The paper restricts themselves to function values and gradients. Our approach on the other hand can be viewed as learning to solve a different convex problem instead, one which is faster to solve and provides a solution close to that of the original. See section \ref{sec:approx}.

There has also been focus on learning how to better optimize neural networks.  For instance, learn new methods beyond stochastic gradient descent and its variations for training neural networks. See for example in \citet{sergio2016learning}, \citet{andrychowicz2016learning}, \citet{chen2017learning}, \cite{lv2017learning}, \cite{bello2017neural} and the references therein.
}


In a similar vein, there has been recent work in training neural networks to learn optimal solutions of optimization problems. Much of the work in that area, has focused  on difficult mixed integer linear optimization problems (MILP) since the aim is to provide solutions more efficiently than solving the original MILP. 
For instance, one of the earliest proposals for solving the Travelling Salesman Problem \citep{hopfield1985} was to transform the problem into a labelling problem (which edge should be in the path) and use Lagrange multipliers to penalize the solution's violations of the constraints. However, this method has been shown to be highly unstable and sensitive to initialization \citep{wilson1988}. More effective methods for combinatorial problems on graphs have been developed by training recurrent neural networks using reinforcement learning. Examples include \citet{Vinyals2015} and \citet{bello2016neural}. These works are particularly useful for graph problems in which the RNN decides which next node to visit. However, these approaches are not developed with the end-to-end learning framework in mind. 




\noindent \paragraph{Paper organization:} The rest of the paper is organized as follows. In section~\ref{sec:end-to-end} we formally describe the problem setting and the end-to-end framework. Then in section~\ref{section:projectnet} we describe the ProjectNet architecture and how to apply it to learn forecasts end-to-end. We also investigate several theoretical properties and provide guarantees of our approach.
In section~\ref{section:2-stage} we extend our method to two-stage stochastic optimization problems in which there are first-stage decisions to make after which the uncertainty is realized and a second-stage decision then has to be made. Finally, in section~\ref{section:experiments} we  compare computationally our proposed method relative to other existing approaches. We perform these comparisons for the traditional as well as a two-stage multi-location cross-fulfilment newsvendor problem, a convex cost newsvendor problem and a shortest path problem. 

\section{End-to-end learning framework for single-stage stochastic optimization problems}
\label{sec:end-to-end}

We first formally present the problem class requiring the integration of machine learning  (to forecast uncertain parameters) with optimization problems. 
Consider a convex objective function $g_u(w)$, where $u$ is the uncertain parameter(s) that must be predicted. We assume $g_u$ and its derivative is ``simple'' to evaluate. For instance, for linear objectives $g_u(w) = c^Tw$, the cost vector parameters are $u = c$, while for quadratic objectives, $g_u(w) = q^Tw + w^TQw$, the parameters are $u = (q, Q)$. We define the following optimization problem with multiple types of constraints $h_i(w) \leq 0$, 
for a convex function $h_i$ and linear equality constraints $l_j(w) = 0$:
\begin{equation}
\label{nominal-task}
\begin{array}{lll}
w^*(u) =& \arg \min_{w} & g_u(w) \\ \\ 
&\text{subject to} & h_i(w) \leq 0, \ \quad i = 1, \dots, p_1 \\ \\ 
& & l_j(w) = 0, \quad j = 1, \dots, p_2.
\end{array}
\end{equation}
For ease of notation, we let $\mathcal{P}$ also denote the feasible region of  problem (\ref{nominal-task}). Suppose we are given $N$ data points $(x^1, u^1), \dots, (x^N, u^N)$ with features $x^n \in \R^p$ and realized costs $u^n \in \R^d$ (we discussed examples of what these can represent above). Given a model $f_\theta$ parameterized by some $\theta$, for some out-of-sample data $x$, we make a prediction $f_\theta(x)$ (e.g., through a neural network) and a corresponding decision $w^*(f_\theta(x)) \in \mathcal{P}$. 
Afterwards, for a realized cost vector $u$, we incur overall cost  $g_u(w^*(f_\theta(x)))$. However, $u$ itself comes from an unknown distribution dependent on features $x$. Let $D_x$ be the  distribution of $u$ conditioned on observing features $x$. Then, the optimal decision $w^*(D_x)$ is given by minimizing the expected cost, as given by the following stochastic optimization problem:
\begin{equation}
\label{eq:stochastic}
    w^*(D_x) = \arg \min_{w \in \mathcal{P}} \mathbb{E}_{u \sim D_x} \left[ g_u(w) \right].
\end{equation}

We first present some traditional methods of approaching this problem and later illustrate the differences and the advantages of our end-to-end method.

\paragraph{Traditional approaches: predict-then-optimize.} 

 A traditional method, which we will refer to as a \emph{predict-then-optimize} approach, learns the forecasting function independently of the downstream optimization problem. For example, one may attempt to learn the distribution $D_x$ itself. We briefly discuss two methods to accomplish this. First, we may assume $D_x$ belongs to some class of distributions such as the normal distribution, in which case, one only needs to predict the mean and variance from the data. Using this prediction, we can then solve or approximate the solution to (\ref{eq:stochastic}). A second approach is to assume the uncertainty can only take some finite number of discrete values and predict the probability of each value to occur. Then again, one would solve problem (\ref{eq:stochastic}) using this discrete distribution learned.

On the other hand, the problem simplifies if the objective function is linear as a function of the uncertainty. That is, we can write 
\begin{equation}
    g_u(w) = u^T\Phi(w)
\end{equation}
for some function $\Phi$. The simplest example is when $\Phi(w) = w$ and the objective $g_u(w) = u^Tw$ is fully linear. Another example, is the case where the objective is quadratic in terms of $w$. For example, $g_u(w) = \sum_{i,j} u_{i,j} w_{i} w_{j}$. 
In these cases with the objective function being linear in the uncertainty, we can rewrite the stochastic optimization problem as 
\begin{align}
    w^*(D_x) =&\ \arg \min_{w \in \mathcal{P}} \mathbb{E}_{u \sim D_x} \left[ g_u(w) \right] \nonumber \\ 
    =&\ \arg \min_{w \in \mathcal{P}} \mathbb{E}_{u \sim D_x} \left[  u^T\Phi(w) \right] \nonumber \\ 
    =&\ \arg \min_{w \in \mathcal{P}} \mathbb{E}_{u \sim D_x} \left[  u \right]^T \Phi(w).  
\label{eq:linear}
\end{align}
This implies one only needs to predict the mean of the distribution $D_x$ and this is optimal independent of the optimization problem itself. This may be done simply by minimizing the mean squared error between the forecast and the true in-sample cost. Let the forecast be some $f_\theta(x)$, parameterized by $\theta$ (such as a neural network with weights $\theta$). Then, we aim to minimize
\begin{equation}
\theta_{\text{predict-then-optimize}} = \arg \min_\theta \frac{1}{N} \sum_{n=1}^N \norm{f_\theta(x^n) - u^n}_2^2.    
\end{equation}

However, in all of these cases, making a forecast independent of the optimization problem loses out on important gains. Clearly, making perfect forecasts will result in optimal decisions. But in practice, any forecasting model will incur some error. Within the context of the optimization problem, not all errors result in the same cost. For example, consider the case where the downstream optimization problem is linear. Let $\bar{u}_x$ be the mean of the distribution $D_x$ and consider two possible forecasts $u_1$ and $u_2$. It is often the case that $u_1$ is closer in mean-squared distance to $\bar{u}_x$  but the corresponding decision $w^*(u_2)$ has lower objective function value (that is, $g_{\bar{u}_x}(w^*(u_2)) \leq g_{\bar{u}_x}(w^*(u_1))$). We illustrate this idea in the example of Figure~\ref{fig:pointerror}. In this example we observe that point $w^*(u_2)$ is actually the same as the optimal solution $w^*(\bar{u}_x)$, while $w^*(u_1)$ is not. This is despite the fact that the cost vector $u_2$ has higher distance from $\bar{u}_x$ than $u_1$ does. This suggests that a lower mean-squared error in prediction does not necessarily result in decisions with a lower objective function value.
\begin{figure}[t]
    \centering
    \includegraphics[scale=0.7]{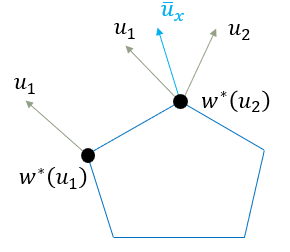}
    \caption{Lower mean-squared error in prediction does not imply lower objective value of the corresponding decisions.}
    \label{fig:pointerror}
\end{figure}
This is due to the fact that a traditional predict-then-optimize method does not make use of this fact when learning its forecasts.

\paragraph{The end-to-end approach.}
As we already discussed, the final goal of an end-to-end approach is to minimize the cost (objective function) of the final decision. We wish to learn a forecasting function $f_\theta(x)$ so that the decisions $w^*(f_\theta(x))$ minimize the expected cost of $g_u(w^*(f_\theta(x))$. This is expressed as follows:
\begin{equation}
\label{eq:end-to-end0}
    \theta_{\text{end-to-end}} = \arg \min_\theta \mathbb{E}_x \mathbb{E}_{u \sim D_x} \left[ g_u(w^*(f_\theta(x)) \right].
\end{equation}
As an approximation, we replace the expectation with samples drawn from the empirical distributions  given by the data. This leads to the following learning task which we call the end-to-end problem we wish to solve.
\begin{equation}
\label{eq:end-to-end}
   \theta_{\text{end-to-end}} = \arg \min_\theta \sum_{n=1}^N g_{u^n}(w^*(f_\theta(x^n)).
\end{equation}
\begin{figure}[b]
    \centering
    \includegraphics[scale=0.5]{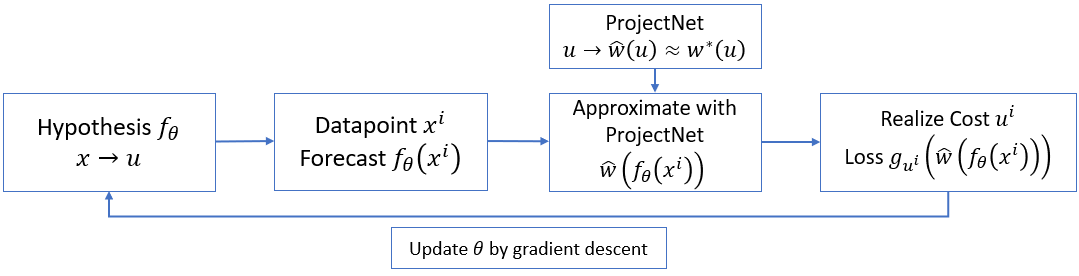}
    \caption{Flow diagram for end-to-end learning using ProjectNet.}
    \label{fig:deterministc_diag}
\end{figure}
See Figure \ref{fig:deterministc_diag}. Given that $f_\theta$ is some neural network, one would generally aim to solve this problem using gradient descent. In order to do so, one must be able to compute the gradient 
\begin{equation}
    \frac{\partial w^*(\nu)}{\partial \nu}
\end{equation}
For each gradient computation one must solve an expensive optimization problem $w^*(f_\theta(x^n))$. Due to the computational time required to solve the optimization problem $w^*(f_\theta(x^n))$ at each gradient step, end-to-end methods easily become intractable as the dimension of the problem increases as this step is significantly more expensive than a simple forward and backward pass normally required to train neural networks.

{\color{black} 
We propose to address this issue by using some approximation $\hat{w}(u)$ which is faster to compute. We show that can safely make this switch. Moreover, we will observe that it suffices for $\hat{w}$ to approximate $w^*$ well only on the distribution of data that we observe. In general, the algorithms used to compute $w^*$ work for any possible input, however this is not necessary in our case. This leads us to learn an approximation $\hat{w}$ that performs well, and potentially more efficiently, on only the distribution of data we observe. 

We first formalize the fact that one can learn using some approximation $\hat{w}$ instead of the exact $w^*$. Let $\hat{f}$ be the model one learns when minimizing the expected cost when using $\hat{w}$ and let $f^*$ be the model when using the true $w^*$. That is,
\begin{align}
    \hat{f} =&\ \arg\min_{f} \mathbb{E}_{x} \mathbb{E}_{u\sim D_x} [g_{u}(\hat{w}(f(x))] \label{eq:opt-fhat} \\
    f^* =&\ \arg\min_{f} \mathbb{E}_{x} \mathbb{E}_{u\sim D_x} [g_{u}({w}^*(f(x))]
\end{align}
Then, having learned $\hat{f}$, we make out-of-samples decisions $w^*(\hat{f}(x))$. The objective value of $w^*(\hat{f}(x))$ compared to the optimal decision $w^*(f^*(x))$ if one had trained using the exact oracle $w^*$ is now bounded only by the approximation error of $\hat{w}$ compared to $w^*$:
\begin{theorem}[Approximation impact on end-to-end] One can decompose the difference in cost of making decisions based off of $\hat{f}$ from the optimal cost from making decisions based off $f^*$ by decomposing the error into two components:
\begin{equation}
    \mathbb{E}_{x} \mathbb{E}_{u\sim D_x} \left [g_{u}(w^*(\hat{f}(x))) - g_{u}(w^*(f^*(x))) \right] \leq \epsilon_1 + \epsilon_2
\end{equation}
where 
\begin{enumerate}
    \item  $\epsilon_1 = \mathbb{E}_x  \mathbb{E}_{u\sim D_x} \left| g_{u}(w^*(f^*(x))) - g_{u}(\hat{w}(f^*(x))) \right|  $. This describes how well $\hat{w}$ approximates $w^*$ on input from $f^*$
    \item $\epsilon_2 = \mathbb{E}_x  \mathbb{E}_{u\sim D_x} \left| g_{u}(w^*(\hat{f}(x))) - g_{u}(\hat{w}(\hat{f}(x))) \right|  $. Similarly, this describes how well $\hat{w}$ approximates $w^*$ on input from $\hat{f}$.
\end{enumerate}
\end{theorem}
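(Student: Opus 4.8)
The plan is a direct add-and-subtract (telescoping) argument on the population expected cost, using nothing beyond the defining optimality of $\hat f$ and the triangle inequality $|\mathbb{E}[\cdot]|\le \mathbb{E}|\cdot|$. Introduce the four scalars
\begin{align*}
A &:= \mathbb{E}_x\mathbb{E}_{u\sim D_x}[g_u(w^*(\hat f(x)))], & B &:= \mathbb{E}_x\mathbb{E}_{u\sim D_x}[g_u(\hat w(\hat f(x)))], \\
C &:= \mathbb{E}_x\mathbb{E}_{u\sim D_x}[g_u(\hat w(f^*(x)))], & D &:= \mathbb{E}_x\mathbb{E}_{u\sim D_x}[g_u(w^*(f^*(x)))],
\end{align*}
so that the quantity to be bounded is exactly $A-D$, and decompose $A-D=(A-B)+(B-C)+(C-D)$. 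The only genuine choice in the argument is which intermediate terms to insert: composing the surrogate $\hat w$ with both $\hat f$ and $f^*$ (rather than some mixture of $w^*$ and $\hat w$) is precisely what leaves a middle term $B-C$ that the optimality of $\hat f$ can kill.

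Then I would argue the three pieces in turn. For the middle term, since $\hat f$ minimizes $f\mapsto \mathbb{E}_x\mathbb{E}_{u\sim D_x}[g_u(\hat w(f(x)))]$ by \eqref{eq:opt-fhat}, evaluating that objective at the feasible competitor $f^*$ gives $B\le C$, hence $B-C\le 0$ and this term drops entirely (if the minimum is not attained one replaces $\hat f$ by an infimizing sequence and passes to the limit, with no other change). For the two outer terms, linearity of expectation together with $|\mathbb{E}[\cdot]|\le \mathbb{E}|\cdot|$ gives
\begin{align*}
A-B \;\le\; |A-B| &= \Big|\,\mathbb{E}_x\mathbb{E}_{u\sim D_x}\big[g_u(w^*(\hat f(x)))-g_u(\hat w(\hat f(x)))\big]\Big| \\
&\le \mathbb{E}_x\mathbb{E}_{u\sim D_x}\big|g_u(w^*(\hat f(x)))-g_u(\hat w(\hat f(x)))\big| \;=\; \epsilon_2,
\end{align*}
and, with $f^*$ in the role of $\hat f$, identically $C-D\le |C-D|\le \epsilon_1$. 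Summing the three bounds yields $A-D\le \epsilon_1+\epsilon_2$. The same argument applies verbatim to the empirical version of \eqref{eq:end-to-end} in which each expectation is replaced by an average over the data.

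I do not expect a real analytic obstacle here: no convexity, smoothness, or boundedness of $g_u$, $w^*$, or $\hat w$ is invoked, only finiteness of the expectations defining $\epsilon_1$ and $\epsilon_2$ so that the manipulations are legitimate. The one point worth being careful about is the asymmetry of the two error terms and matching it to the telescoping: $\epsilon_1$ measures how well $\hat w$ reproduces $w^*$ on the forecasts of the \emph{ideal} model $f^*$, while $\epsilon_2$ measures the same on the forecasts of the \emph{actually trained} model $\hat f$; the decomposition must be arranged so that $\hat f$'s optimality is applied against the pair $(\hat w\circ\hat f,\ \hat w\circ f^*)$, since a rearrangement that instead left a $w^*$-versus-$\hat w$ gap in the middle would not be controlled by anything available.
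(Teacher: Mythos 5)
Your proposal is correct and is essentially the paper's own argument: the same telescoping through the two intermediate quantities $\mathbb{E}[g_u(\hat w(\hat f(x)))]$ and $\mathbb{E}[g_u(\hat w(f^*(x)))]$, with the outer differences bounded by $\epsilon_2$ and $\epsilon_1$ via $|\mathbb{E}[\cdot]|\le\mathbb{E}|\cdot|$ and the middle difference killed by the optimality of $\hat f$ for the surrogate objective \eqref{eq:opt-fhat}. If anything, your writeup states the decomposition more cleanly than the paper's (whose displayed add-and-subtract steps contain typos, inserting $g_u(w^*(\hat f(x)))-g_u(w^*(\hat f(x)))$ where $g_u(\hat w(\hat f(x)))$ is meant, and misattributing which objective $\hat f$ minimizes), but the underlying argument is identical.
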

Notice that to bound the error terms $\epsilon_1, \epsilon_2$ the approximation $\hat{w}$ only needs to perform well on inputs from $f^*(x)$ and $\hat{f}(x)$. Moreover, while we do not know $f^*$, we do have instances of $f^*(x)$. Specifically, the data $f^*(x_1) = u_1, \dots, f^*(x_N) = u_N$ that we observe.

\begin{proof}
    The proofs follows from a sequence of standard manipulations and triangle inequalities. Consider adding and subtracting the expectation of the term $ g_u(w^*(\hat{f}(x))) $:
    \begin{align}
        &\ \mathbb{E}_{x} \mathbb{E}_{u\sim D_x} \left [g_{u}(w^*(\hat{f}(x))) - g_{u}(w^*(f^*(x))) \right] = \\ &\ \mathbb{E}_{x} \mathbb{E}_{u\sim D_x} \left [g_{u}(w^*(\hat{f}(x))) - \paren{g_u(w^*(\hat{f}(x))) - g_u(w^*(\hat{f}(x)))} - g_{u}(w^*(f^*(x))) \right] 
    \end{align}
    and grouping the first two and second two terms:
    \begin{equation}
        \mathbb{E}_{x} \mathbb{E}_{u\sim D_x} \left [g_{u}(w^*(\hat{f}(x))) - g_{u}(w^*(f^*(x))) \right] \leq \epsilon_2 + \mathbb{E}_{x} \mathbb{E}_{u\sim D_x} \left [ g_u(w^*(\hat{f}(x))) - g_{u}(w^*(f^*(x))) \right].
    \end{equation}
    Now consider adding and subtracting the expectation of the term $g_u(\hat{w}(f^*(x)))$. We find 
    \begin{align}
        &\ \mathbb{E}_{x} \mathbb{E}_{u\sim D_x} \left [g_{u}(w^*(\hat{f}(x))) - g_{u}(w^*(f^*(x))) \right] \leq \\ &\ \epsilon_2 + \mathbb{E}_{x} \mathbb{E}_{u\sim D_x} \left [ g_u(w^*(\hat{f}(x))) - \paren{g_u(\hat{w}(f^*(x))) - g_u(\hat{w}(f^*(x)))} - g_{u}(w^*(f^*(x))) \right].
    \end{align}
    The last two terms in the expectation can be bounded by $\epsilon_1$ so we are now left with
    \begin{equation}\mathbb{E}_{x} \mathbb{E}_{u\sim D_x} \left [g_{u}(w^*(\hat{f}(x))) - g_{u}(w^*(f^*(x))) \right] \leq \epsilon_2 + \epsilon_1 + \mathbb{E}_{x} \mathbb{E}_{u\sim D_x} \left [ g_u(w^*(\hat{f}(x))) - g_u(\hat{w}(f^*(x))) \right].
    \end{equation}
    Finally, note the expectation on the right hand side is always non-positive. Indeed, $\hat{f}$ is the minimizer of $\mathbb{E}_{x} \mathbb{E}_{u\sim D_x} \left [ g_u(w^*(f(x))) \right]$ over $f \in \mathcal{F}$ by definition (see \eqref{eq:opt-fhat}). Therefore, $\mathbb{E}_{x} \mathbb{E}_{u\sim D_x} \left [ g_u(w^*(\hat{f}(x))) \right] \leq  \mathbb{E}_{x} \mathbb{E}_{u\sim D_x} \left[ g_u(\hat{w}(f^*(x))) \right]$.
\end{proof}
}

Moreover, there are additional advantages to using an approximation $\hat{w}$ instead of the exact $w^*$ beyond only speed considerations. Specifically, this approximation aspect is crucial for linear optimization problems --- an incredibly broad class of problems with countless applications. 
As noted earlier, the values of these gradients for commonly occurring linear decision problems are typically zero (as we already discussed above, this is because the optimal solution is always a vertex, and hence a piece-wise constant function, for which the gradient is zero). Moreover, if the gradient is zero this would result in no update of the weights of the neural network, making it impossible to learn. Therefore, using an approximation $\hat{w}$ instead of $w^*$ in solving \eqref{eq:end-to-end} for which the gradient of $\hat{w}$ is non-zero would resolve this issue. 

{See the diagram in Figure \ref{fig:deterministc_diag} for an illustration of our proposed end-to-end method. The exact structure of $\hat{w}$ and how it is learned can be found in section \ref{section:projectnet}. An advantage of our approach is that the computationally expensive optimization problem $w^*(u)$ is never explicitly solved, \textcolor{black}{even during the learning stage to approximate $w^*(u)$}. Rather we replace it with the forward pass of a simple neural network $\hat{w}$ which we will denote as a \emph{ProjectNet} model architecture.} 

Once a forecasting model $\theta_{\text{end-to-end}}$ has been learned, for new out-of-sample features $x$, we make the forecast $u = f_{\theta_{}}(x)$ which then allows us to determine the decision $w^*(u)$ by solving the optimization problem of interest. \textcolor{black}{Depending on whether the forecast is point or distributional in nature, the down stream problem of identifying $w^*(u)$ is a deterministic or a stochastic optimization problem}. We have already shown that making point forecasts is optimal for objective functions that are linear in terms of the uncertainty. Next we show this is also true for nonlinear loss-type objective functions found for example, in inventory and resource management problems. We primarily focus on these classes of problems in the computational section of this paper although the proposed methodology is applicable to convex \textcolor{black}{stochastic} optimization problems in general.

\paragraph{Point Forecasts vs. Distributional Forecasts.} We showed in the case of objectives that are linear in the uncertainty, it is sufficient to predict a point forecast (the mean of $D_x$) instead of the entire distribution $D_x$ itself (see for eq. \eqref{eq:linear}). Using only a predict-then-optimize framework, separating the prediction from the optimization, point forecasts are not sufficient for other more complex objectives. As an example, consider the newsvendor problem. In this case, there is unknown demand $u$ and a decision $w$ to be made on the stock to allocate. There is a holding cost $h$, for every unit of stock unsold and a backorder cost $b$, for every unit of demand that is unmet. The objective function can be represented as 
\begin{equation}
    g_u(w) = \max (h (w - u), b (u - w)),
\end{equation}
which is a nonlinear objective. In this problem, it is known that the optimal stocking decision is the $(b / (h + b))^{th}$ quantile of the demand distribution $D_x$ (see \citealt{arrow1951optimal}). However, a predict-then-optimize point forecast would incorrectly target the mean.

We show that for end-to-end learning problems, point forecasts are sufficient for a large class of convex stochastic problems.
\begin{prop}
\label{prop:point vs. distribution}
Consider a single-stage stochastic optimization problem with a loss-type objective function $g_u(w)$. That is, objectives for which $u$ and $w$ belong to the same feasible space and 
\begin{equation}
    u = \arg\min_{w \in \mathcal{P}} g_u(w).
\end{equation}
Then, for any distribution $D$ of the uncertainty $u$, there exists a single point forecast $d$ which produces the same solution as solving the original stochastic problem:
\begin{equation}
    \arg\min_{w \in \mathcal{P}} \mathbb{E}_{u \sim D} \left[ g_u(w) \right] = w^*(d).
\end{equation}
\end{prop}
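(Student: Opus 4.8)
The plan is to exploit the defining feature of a loss-type objective: solving the \emph{deterministic} problem with forecast $d$ simply returns $d$ itself. Indeed, for any $d$ in the common feasible space, the hypothesis $u = \arg\min_{w\in\mathcal{P}} g_u(w)$ applied with $u$ replaced by $d$ gives $w^*(d) = \arg\min_{w\in\mathcal{P}} g_d(w) = d$. So the proposition follows the moment we exhibit a point forecast $d \in \mathcal{P}$ that coincides with the stochastic optimizer.

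First I would set $w^*_D := \arg\min_{w\in\mathcal{P}} \mathbb{E}_{u\sim D}[g_u(w)]$, the solution of the original stochastic problem. The map $w \mapsto \mathbb{E}_{u\sim D}[g_u(w)]$ is convex on $\mathcal{P}$, being an average of the convex functions $g_u$, so under the mild attainment assumptions implicit in the setup (compactness of $\mathcal{P}$, or coercivity of the $g_u$ as holds for the newsvendor-type costs considered in the paper) this minimizer exists and lies in $\mathcal{P}$.

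Next, since $u$ and $w$ range over the same feasible set $\mathcal{P}$, the point $d := w^*_D$ is itself an admissible point forecast. Applying the loss-type identity to $d$ yields $w^*(d) = d$, and therefore
\[
\arg\min_{w\in\mathcal{P}} \mathbb{E}_{u\sim D}[g_u(w)] \;=\; w^*_D \;=\; d \;=\; w^*(d),
\]
which is exactly the claimed equality. As a sanity check, in the newsvendor instance $g_u(w) = \max\!\big(h(w-u),\, b(u-w)\big)$ this recovers the familiar fact that the correct point forecast is the $\big(b/(h+b)\big)$-quantile of $D$, since that quantile is precisely $w^*_D$.

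The only real care needed is bookkeeping about existence and uniqueness of the two $\arg\min$'s: existence of $w^*_D$ requires the expected objective to attain its infimum over $\mathcal{P}$, while uniqueness of $w^*(d)$ is already built into the loss-type hypothesis, which asserts that the argmin equals the single point $d$. Thus the ``hard part'' is not really an obstacle but a well-posedness check; once $\arg\min$ is read as returning a genuine minimizer, the argument is a one-line substitution.
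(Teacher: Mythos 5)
Your argument is correct and is essentially identical to the paper's own proof: both take $d$ to be the minimizer of the stochastic problem and then invoke the loss-type identity $w^*(d) = \arg\min_{w\in\mathcal{P}} g_d(w) = d$ to conclude. The extra remarks on existence/uniqueness and the newsvendor sanity check are reasonable additions but do not change the route.
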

An end-to-end method with objective as in \eqref{eq:end-to-end0} would find exactly this forecast $d$ at optimality. 
\begin{proof}
Let $d$ be the solution to the problem using distributional forecast $D$:
\begin{equation}
    d = \arg \min_{w} \E_{u \sim D} [ g_u(w) ].
\end{equation}
Now consider making a forecast of exactly $d$. Then, $w^* = \arg \min_{w} g_d(w) = d$, since $g_d$ is a loss function. 
\end{proof}

It is a major advantage to be able to restrict ourselves only to point forecasts without losing any decision-making ability compared to when making distributional forecasts. This is because point forecasts require far fewer parameters to predict. Moreover, there may be an exponentially large number of possible scenarios, making the nominal optimization problem intractable with distributional forecasts. In contrast, with the right forecast via end-to-end learning, the nominal problem is an easy to solve deterministic model with the point forecast.

A large class of resource management problems where over-utilization and under-utilization are both penalized fall in this category. Note that specifically for the unconstrained newsvendor problem, the work of ~\citet{Ban2019} shows that quantile regression based point forecasts are sufficient. As additional examples, the capacitated newsvendor problem and the multi-location cross-fulfilment newsvendor problem both fall under this category of loss-type functions. In this paper, we present computational experiments on both and more in section \ref{section:experiments} and appendix \ref{app:exp}.

\section{Meta-Optimization Framework}
\label{sec:approx}

{\color{black}
We find that we have two competing objectives when choosing some approximation $\hat{w}$: (1) the objective that $\hat{w}(u)$ is close to $w^*(u)$, and (2) that $\hat{w}(u)$ can be computed significantly more quickly than $w^*(u)$.  At a high level, our meta-optimization method consists  of learning to solve a different version of the original optimization problem, one which is faster to solve and provides a solution close to that of the original. This is similar in spirit to meta-learning methods, however existing work has largely focus on learning better algorithms to provide faster approximations instead of learning better objectives (for e.g. learning accelerations to gradient descent).

First, we formalize what we mean by being able to compute $\hat{w}$ ``faster.'' This is highly dependent on the method used to solve a given optimization problem. Here we will consider iterative interior point methods. And an approximation $\hat{w}$ is computed ``faster'' depending on the number of iterations needed. We consider the following sets of algorithms to produce approximations $\hat{w}$:
\begin{align}
    \hat{w}_{r,T}(u) =&\ w_T, \\ 
    w_{t+1} =&\ \pi_{\mathcal{P}} \paren{ w_t - \eta \cdot R(u, w_t) }, \quad t = 1, \dots, T-1 \\ 
    & \text{initialized } w_0 \in \mathcal{P}
\end{align}
where $\pi_{\mathcal{P}}$ is the projection operator onto the convex feasible region $\mathcal{P}$, and $R(u,w)$ is any update rule. For example, if $R(u,w_t) = \nabla g_u(w_t)$, the above algorithm reduces to projected gradient descent. Finally, let $\hat{w}_R(u)$ denote the limit of this sequence $\hat{w}_{R,T}(u)$ as $T \to \infty$.
Now, we can restate the two objectives when choosing the approximation function $\hat{w}$:
\begin{enumerate}
\item Minimize the regret of $\hat{w}_{R}(u)$ compared to $w^*(u)$. Specifically, minimize $\mathbb{E}[g_u(\hat{w}_{R}(u)) - g_u(w^*(u))]$.
\item Maximize the convergence rate of $\hat{w}_{R,T}(u)$ to $\hat{w}_{R}(u)$.
\end{enumerate}
}

{\color{black} 
Now we gain some more intuition on the effect the update rule $R$ has on the convergence of $\hat{w}_{R}(u)$. Suppose there exists some function $r_u(w)$ for which $\nabla r_u(w) = R(u,w_t)$. Then, $\hat{w}_{R,T}(w)$ essentially performs gradient descent to minimize the function $r_u(w)$ over $w \in \mathcal{P}$. From an analysis point of view, we can instead consider the problem 
\begin{equation}
   \arg\min_{w \in \mathcal{P}} r_u(w)
\end{equation}
and now the point of convergence $\hat{w}_R(u)$ is the solution to the above problem.

In this paper we will make the choice that the update rule is simply the gradient of $g_u$ plus some additional linear term $L(u)$. Therefore, we will choose 
\begin{equation}
    R(u,w) = \nabla g_u(w) + \frac{\gamma}{\eta} \cdot L(u)w
\end{equation}
where $L(u)$ is a matrix, possibly dependent on $u$. The update step then takes the form $\hat{w}_{t+1} = \pi_{\mathcal{P}} \paren{\hat{w}_{t} - \eta \nabla g_u(w) - \gamma \cdot L(u)w}$. For ease of notation, we let $\hat{w}_{L,T}(u)$ denote the sequence of points using this update rule and $\hat{w}_{L}(u)$ be the point of convergence (assuming it exists, more on this in the next section). From this, we see that $\hat{w}_{L}$ is equal to
\begin{equation}
\label{eq:wl}
    \hat{w}_{L}(u) = \arg \min_{w \in \mathcal{P}} g_u(w) + \frac{\gamma}{2\eta} w^T L(u) w.
\end{equation}


\subsection{Analytical properties}
\label{sec:analytical}

First, note that the matrix $L(u)$ needs to be positive semidefinite in order for the  problem in \eqref{eq:wl} to be convex as well as for the sequence to converge properly. In terms of analysis, we will assume that $L(u)$ is positive semidefinite. We will see algorithmically how we might be able to ensure this in section \ref{sec:control}. 

This choice of update rule has some nice properties we can make use of to directly answer objectives (1) and (2) mentioned above. In terms of approximation error, we find the following.
\begin{prop}
\label{prop:bound1}
The decision $\hat{w}_L(u)$ when evaluated against the true solution $w^*(u)$ has regret 
\begin{align}
    g_u(\hat{w}_L(u)) - g_u(w^*(u)) \leq &\ \frac{\gamma}{2\eta} (w^*(u))^T L w^*(u) \\ 
    \leq &\ \frac{\gamma}{2\eta} \sigma_{\max}(L) \cdot D^2
\end{align}
where $\sigma_{\max}(L)$ is the largest eigenvalue of $L$ and $D$ is the diameter of the feasible region $\mathcal{P}$.
\end{prop}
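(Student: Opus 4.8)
The plan is to exploit the defining optimality property of $\hat{w}_L(u)$ directly. Write $\phi_u(w) = g_u(w) + \frac{\gamma}{2\eta}\, w^T L(u) w$ for the regularized surrogate objective, so that by \eqref{eq:wl} the point $\hat{w}_L(u)$ is a minimizer of $\phi_u$ over $\mathcal{P}$. Since $w^*(u)$ is feasible, i.e.\ $w^*(u)\in\mathcal{P}$, we immediately get $\phi_u(\hat{w}_L(u)) \le \phi_u(w^*(u))$, which written out is
\begin{equation}
g_u(\hat{w}_L(u)) + \tfrac{\gamma}{2\eta}\, \hat{w}_L(u)^T L\, \hat{w}_L(u) \;\le\; g_u(w^*(u)) + \tfrac{\gamma}{2\eta}\, (w^*(u))^T L\, w^*(u).
\end{equation}

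The first claimed inequality then follows by moving $g_u(w^*(u))$ to the left-hand side and dropping the term $\tfrac{\gamma}{2\eta}\,\hat{w}_L(u)^T L\, \hat{w}_L(u)$, which is nonnegative precisely because $L = L(u)$ is positive semidefinite (the standing assumption of Section~\ref{sec:analytical}). For the second inequality I would bound the quadratic form using the Rayleigh-quotient estimate $(w^*(u))^T L\, w^*(u) \le \sigma_{\max}(L)\,\norm{w^*(u)}_2^2$, and then bound $\norm{w^*(u)}_2 \le D$ since $w^*(u)\in\mathcal{P}$; formally one takes $0 \in \mathcal{P}$ without loss of generality (after a harmless shift of coordinates), so that $\norm{w^*(u)}_2 = \norm{w^*(u) - 0}_2$ does not exceed the diameter $D$ of $\mathcal{P}$.

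There is essentially no technical obstacle here: the argument is a single application of optimality of $\hat{w}_L(u)$ on $\mathcal{P}$ together with positive semidefiniteness of $L$. The only point needing a word of care is the passage from $\norm{w^*(u)}_2$ to the diameter $D$ in the final step — i.e.\ making precise in what sense $D$ controls $\norm{w^*(u)}_2$ — which is handled by the normalization just mentioned (or, equivalently, by reading $D$ as $\sup_{w\in\mathcal{P}}\norm{w}_2$); this does not affect the $O(\gamma/\eta)$ scaling that the bound is meant to convey, and a sharper translation-invariant variant with $D^2$ replaced by $\sup_{w\in\mathcal{P}}\norm{w}_2^2$ is purely cosmetic.
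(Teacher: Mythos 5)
Your proposal is correct and follows essentially the same route as the paper's own proof: compare the regularized objective values of $\hat{w}_L(u)$ and the feasible point $w^*(u)$, drop the positive semidefinite quadratic term in $\hat{w}_L(u)$, and finish with the Rayleigh-quotient bound together with $\norm{w^*(u)} \leq D$. Your explicit remarks on where positive semidefiniteness is used and on reading $D$ as a bound on $\sup_{w\in\mathcal{P}}\norm{w}_2$ only make precise steps the paper takes implicitly.
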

\begin{proof}
    This follows directly from the optimality of $\hat{w}_L(u)$ with respect to the objective $g_u(w) + \gamma/2\eta \ w^TLw$. So, its objective value is lower than the objective value of $w^*(u)$:
    \begin{align}
        g_u(\hat{w}_L(u)) + \frac{\gamma}{2\eta}\hat{w}_L(u)^TL\hat{w}_L(u) \leq&\ g_u(w^*(u)) + \frac{\gamma}{\eta} w^*(u)Lw^*(u) \\ 
       \implies \  g_u(\hat{w}_L(u)) - g_u(w^*(u)) \leq&
        \frac{\gamma}{2\eta} w^*(u)Lw^*(u)  - \frac{\gamma}{2\eta}\hat{w}_L(u)^TL\hat{w}_L(u) \\ 
      \implies \   g_u(\hat{w}_L(u)) - g_u(w^*(u)) \leq& 
        \frac{\gamma}{2\eta} w^*(u)Lw^*(u) 
    \end{align}
Finally, a standard property of eigenvalues states that $\max_{\norm{w}=1} w^TLw = \sigma_{\max}(L)$. Therefore, since $w \in \mathcal{P}$ and hence $\norm{w} \leq D$, it follows that $\max_{w \in \mathcal{P}} w^TLw \leq \sigma_{\max}(L) \cdot D^2$
\end{proof}

In terms of convergence rate, it is known that projected gradient descent has an exponential rate for strongly convex and smooth objective functions. We will make the single assumption that $g_u(w)$ is $\alpha$-smooth, but not necessarily strongly convex. This condition is relatively mild, bounding the gradient of $g$ from above. Most traditional objective functions satisfy this property such as linear, piece-wise linear, quadratic, etc.

\begin{assumption}[smoothness]
\label{ass:smooth}
    We assume the objective function $g_u(w)$ is $\alpha$-smooth and $\beta$-Lipschitz. That is, for any $u$ and any $w^1,w^2 \in \mathcal{P}$ we have
\begin{align}
    \norm{\nabla g_u(w^1) - \nabla g_u(w^2)} \leq \alpha \norm{w^1 - w^2}, \\ 
    \norm{g_u(w^1) - g_u(w^2)} \leq \beta \norm{w^1 - w^2}.
\end{align}
\end{assumption}
\noindent These are often reasonable assumptions. For example, any continuous convex function is Lipschitz over a closed feasible region. 


\begin{prop}
\label{prop:eigen}
    The objective values of the sequence $w_1, \dots, w_t, \dots$ converge exponentially as
    \begin{equation}
        r_u(w_t) - r_u(w^*(u)) \leq \paren{1 - \frac{\gamma/\eta \cdot \sigma_\min(L)}{\gamma/\eta \cdot \sigma_\max(L) + \alpha}}^t \paren{r_u(w_0) - r_u(w^*(u))}
    \end{equation}
    where $\sigma_{\min}(L), \sigma_{\max}(L)$ are the smallest and largest eigenvalues of $L$, and $g_u$ is $\alpha$-smooth.
\end{prop}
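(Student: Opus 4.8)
The plan is to recognize the iteration generating $w_1,w_2,\dots$ as \emph{projected gradient descent with fixed step $\eta$ applied to the surrogate objective}
\[
  r_u(w) \;=\; g_u(w) \;+\; \frac{\gamma}{2\eta}\, w^{T} L(u)\, w ,
\]
and then quote the classical linear-convergence theorem for projected gradient descent on a strongly convex, smooth function. Indeed $\nabla r_u(w) = \nabla g_u(w) + \tfrac{\gamma}{\eta} L(u) w$, so the update $w_{t+1} = \pi_{\mathcal{P}}\bigl(w_t - \eta\nabla g_u(w_t) - \gamma L(u) w_t\bigr)$ is exactly $w_{t+1} = \pi_{\mathcal{P}}\bigl(w_t - \eta\,\nabla r_u(w_t)\bigr)$. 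With this identification, the bulk of the work is to pin down the strong-convexity and smoothness moduli of $r_u$ in terms of $\alpha$ and the spectrum of $L$, and to reconcile the minimizer of $r_u$ (to which the sequence actually converges) with the point $w^*(u)$ appearing in the statement.

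First I would establish the two regularity estimates. Since $L=L(u)$ is positive semidefinite (the standing assumption in Section~\ref{sec:analytical}), the Hessian of the added quadratic is $\tfrac{\gamma}{\eta}L$, and $\tfrac{\gamma}{\eta}\sigma_{\min}(L)\,I \preceq \tfrac{\gamma}{\eta}L \preceq \tfrac{\gamma}{\eta}\sigma_{\max}(L)\,I$. Combining the lower bound with convexity of $g_u$ shows $r_u$ is $\mu$-strongly convex with $\mu = \tfrac{\gamma}{\eta}\sigma_{\min}(L)$; combining the upper bound with $\alpha$-smoothness of $g_u$ (Assumption~\ref{ass:smooth}), via the triangle inequality $\norm{\nabla r_u(w^1)-\nabla r_u(w^2)} \le \norm{\nabla g_u(w^1)-\nabla g_u(w^2)} + \tfrac{\gamma}{\eta}\norm{L(w^1-w^2)}$, shows $r_u$ is $\ell$-smooth with $\ell = \alpha + \tfrac{\gamma}{\eta}\sigma_{\max}(L)$. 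The ratio $\mu/\ell = \dfrac{\gamma/\eta\,\cdot\,\sigma_{\min}(L)}{\gamma/\eta\,\cdot\,\sigma_{\max}(L) + \alpha}$ is precisely the quantity inside the parenthesis of the claimed bound.

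Next I would invoke (and sketch) the standard rate: for an $\ell$-smooth, $\mu$-strongly convex function over a closed convex set, projected gradient descent with step $\eta = 1/\ell$ satisfies $r_u(w_t) - r_u^\star \le (1-\mu/\ell)^t\bigl(r_u(w_0) - r_u^\star\bigr)$, where $r_u^\star = \min_{w\in\mathcal{P}} r_u(w)$. The short self-contained derivation: the descent lemma combined with the variational inequality characterizing $\pi_{\mathcal{P}}$ gives the one-step decrease $r_u(w_{t+1}) \le r_u(w_t) - \tfrac{\ell}{2}\norm{w_{t+1}-w_t}^2$; a constrained Polyak--Lojasiewicz (gradient-dominance) inequality, obtained by minimizing the $\mu$-strong-convexity lower bound of $r_u$ anchored at $w_t$ along the segment joining $w_t$ to the minimizer, upgrades this to $r_u(w_{t+1}) - r_u^\star \le (1-\mu/\ell)(r_u(w_t) - r_u^\star)$, and induction closes the argument. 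A final cosmetic step replaces $r_u^\star$ by $r_u(w^*(u))$ as in the statement: write $a = r_u^\star$ and $b = r_u(w^*(u)) \ge a$ (legitimate since $w^*(u)\in\mathcal{P}$); then $r_u(w_t) - b \le (1-\mu/\ell)^t(r_u(w_0)-b)$ follows from the bound with $a$ because the discrepancy equals $-(b-a)\bigl(1-(1-\mu/\ell)^t\bigr) \le 0$.

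The main obstacle is not the unconstrained intuition but handling the \emph{projected} version cleanly — the textbook strongly-convex rate is typically stated without constraints, so one must route both the descent step and the gradient-dominance inequality through the gradient-mapping / prox-inequality form of the estimates rather than through $\norm{\nabla r_u}$ directly. A secondary point to be careful about is step-size compatibility: taking $\eta = 1/\ell$ here amounts to the implicit relation $\eta\alpha + \gamma\,\sigma_{\max}(L) = 1$, which is the choice under which the constant is exactly $1-\mu/\ell$; a generic $\eta \le 1/\ell$ still yields geometric convergence, with the slightly weaker factor $1-\eta\mu$. Finally, it is worth flagging where each hypothesis enters: $\alpha$-smoothness of $g_u$ controls $\ell$; positive semidefiniteness of $L$ ensures $r_u$ is convex; and strict positivity of $\sigma_{\min}(L)$ is what supplies the strong convexity that $g_u$ alone need not possess — which is exactly the reason the $L$-term is introduced.
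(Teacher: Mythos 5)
Your proposal is correct and follows essentially the same route as the paper's proof: identify the iteration as (projected) gradient descent on the surrogate $r_u(w) = g_u(w) + \tfrac{\gamma}{2\eta}w^T L w$, bound the strong-convexity modulus by $\tfrac{\gamma}{\eta}\sigma_{\min}(L)$ and the smoothness modulus by $\alpha + \tfrac{\gamma}{\eta}\sigma_{\max}(L)$, and invoke the classical $(1-\mu/\ell)^t$ rate. If anything you are more careful than the paper on three points it glosses over — routing the rate through the projected/gradient-mapping form rather than the unconstrained statement, the implicit step-size relation $\eta = 1/\ell$, and the distinction between the minimizer of $r_u$ and the point $w^*(u)$ appearing in the bound.
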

\begin{proof}
    We will rely on traditional results in convex optimization for convergence rates for strongly convex and smooth convex functions. Specifically, for $\mu$-strongly convex and $\beta$-smooth convex functions $r_u(w)$, a sequence $w_{t+1} = w_{t} - (1/\beta )\nabla r_u(w_t)$ will converge as follows (see for example, \cite{boyd2004convex}):
    \begin{equation}
        r_u(w_t) - r_u(w^*(u)) \leq \paren{1 - \frac{\mu}{\beta}}^t \paren{r_u(w_0) - r_u(w^*(u))}.
    \end{equation}
    Therefore, it suffices to show that $\mu \geq \gamma/\eta \cdot \sigma_{\min}(L)$ and $\beta \leq \gamma/\eta \cdot \sigma_{\max}(L) + \alpha$ for $r_u(w) = g_u(w) + \gamma/2\eta \cdot w^T L w$. 
    
    We prove strong convexity first. $r_u$ is $\mu$-strongly convex if and only if the smallest eigenvalue of $ \nabla^2 r_u(w) $ is at least $\mu$. We can rewrite this as $\sigma_{\min}(\nabla^2 r_u(w)) = \sigma_{\min}(\nabla^2 g_u(w) + \gamma/2\eta \cdot \nabla^2 w^TLw) \geq \sigma_\min(\nabla^2 g_u(w)) + \gamma/\eta \cdot \sigma_{\min}(L) $. Since $g_u(w)$ is convex, the hessian is positive semidefinite. Therefore, $\sigma_\min(\nabla^2 g_u(w)) \geq 0$. Finally, $\mu \geq \gamma/\eta\cdot \sigma_\min(L)$.

    Next, we prove the smoothness condition. This requires bounding the maximum eigenvalue of $\nabla^2 r$. We have $\sigma_\max(r_u(w)) = \sigma_\max(\nabla^2 g_u(w) + \eta/2\gamma w^TLw) \leq \sigma_\max(\nabla^2 g_u(w)) + \gamma/\eta\cdot  \sigma_\max(L)$. The first term can further be bounded by the smoothness condition. So, $\sigma_\max(r_u(w)) \leq \alpha + \gamma/\eta \cdot \sigma_\max(L)$.
\end{proof}

Putting these two propositions together suggests learning the function $L(u)$ which minimizes the upper bound on regret in proposition \ref{prop:bound1} on the data available while keeping the eigenvalues of $L(u)$ bounded as in proposition \ref{prop:eigen}. Consider solving
\begin{align}
    \min_{L} &\ \sum_{n=1}^N w^*(u^n)^T L(u^n) w^*(u_n) \\ 
    \text{s.t.} &\ \sigma_{\min}(L(u^n)) 
    \geq \underline{\lambda} \nonumber, \quad \forall n, \\ 
    &\ \sigma_{\max}(L(u^n)) \leq \overline{\lambda}, \quad \forall n. \nonumber
\end{align}
While appealing, this formulation has several drawbacks. From a computational point of view, this is a difficult problem. If $L(u)$ is a linear mapping, the problem is a linear semidefinite problem. However, this only ensures that the model $L(u)$ outputs a positive semidefinite (PSD)  matrix only on the input data, and not necessarily out-of-sample. If $L(u)$ is a constant function, that is we always use the same matrix independently of $u$, this problem does become more tractable. 

Aside from computational complexity, there is another issue with the above. In practice, we will not perform enough iterations to converge, and for the sake of speed, will only use relatively few iterations. While ensuring the rate of convergence is useful, it leaves a lot missing when trying to calculate the regret after a small number of iterations, say only 5 or 10 update iterations. Practically, a more useful problem is to learn $L(u)$ by solving
\begin{equation}
\label{eq:final}
    \min_{L} \sum_{n=1}^N g_{u^n} (\hat{w}_{L(u^n),T}(u^n)).
\end{equation}
Moreover, we can bound the generalization gap of learning ${L}$ from data. Suppose we are given a dataset of $N$ i.i.d. samples. Let $C({L})$ be the expected cost, and $\hat{C}({L})$ the empirical cost:
\begin{align}
    C({L}) =&\ \mathbb{E}_u [g_u(\hat{w}_{{L}(u), T}(u)] \\ 
    \hat{C}({L}) =&\ \frac{1}{N} \sum_{n=1}^N g_{u^n} (\hat{w}_{{L}(u^n),T}(u^n))
\end{align}
Now suppose we are learning the function $L(u)$ from a hypothesis class $\mathscr{L}$. This is a set of functions mapping $u \in \R^d$ to a matrix $L \in \R^{d \times d}$. Essentially, a mapping $\R^d \to \R^{d^2}$. Rademacher complexity aims to define the complexity of this set of functions $\mathscr{L}$.
\begin{definition}[Multidimensional Rademacher Complexity]
 The empirical Rademacher complexity of the hypothesis class of function $\mathscr{L}$ from $\R^d \to \R^{d^2}$  is given by
    \begin{equation}
    \mathscr{R}_N(\mathscr{L}) = \mathbb{E}_{u^1,\dots,u^N} \mathbb{E}_{\sigma}\left[ \sup_{L \in \mathscr{L}} \frac{1}{N} \sum_{n=1}^N \sum_{k=1}^{d^2} \sigma_{nk} L_k(u^n) \right]
\end{equation}
where $\sigma_{nk}$ are i.i.d. variables uniformly sampled from $\{-1,1\}$ (also known as Rademacher random variables). 
\end{definition}
\begin{theorem}[Generalization bound]
\label{thm1}
With probability $1 - \delta$, for any function ${L} \in \mathscr{L}$,
\begin{equation}
    C({L}) \leq \hat{C}({L}) + \lambda_T \cdot \mathscr{R}_N(\mathscr{L}) + \sqrt{\frac{\log(1/\delta)}{N}}
\end{equation} 
where $\lambda_T \leq \sqrt{2} \beta \gamma D \cdot \frac{1 - \paren{1 - \gamma \sigma_{\min} + \alpha }^T}{ \gamma \sigma_{\min} - \alpha}
$, $D$ is the diameter of the feasible region $\mathcal{P}$ and $\sigma_\min$ is the smallest eigenvalue possible of any matrix output from the class $\mathscr{L}$ and where $g_u(w)$ is $\alpha$-smooth and $\beta$-Lipschitz as in assumption \ref{ass:smooth}. Finally, $ \mathscr{R}_N(\mathscr{L}) $ denotes the Rademacher complexity of the hypothesis class $(\mathscr{L})$.
\end{theorem}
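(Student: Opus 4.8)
The plan is to view this as a standard Rademacher-complexity generalization bound applied to the composite loss class
\[
\mathcal{G} \;=\; \Big\{\, u \mapsto g_u\paren{\hat{w}_{L(u),T}(u)} \;:\; L \in \mathscr{L} \,\Big\},
\]
and then reduce the Rademacher complexity of $\mathcal{G}$ to that of $\mathscr{L}$ by a Lipschitz-composition argument. First I would invoke the usual symmetrization bound: with probability at least $1-\delta$, every $G \in \mathcal{G}$ satisfies $\mathbb{E}_u[G(u)] \le \frac{1}{N}\sum_{n=1}^N G(u^n) + c_1\,\mathscr{R}_N(\mathcal{G}) + c_2\sqrt{\log(1/\delta)/N}$ for absolute constants $c_1,c_2$ (which, after the standard normalization, match the form in the statement). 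The boundedness hypothesis this step needs is automatic here: every iterate, and in particular $\hat{w}_{L(u),T}(u)$, lies in $\mathcal{P}$ because each step ends with an application of $\pi_{\mathcal{P}}$, and $g_u$ is $\beta$-Lipschitz on the set $\mathcal{P}$ of diameter $D$, so the range of each $G\in\mathcal{G}$ lies in an interval of length at most $\beta D$. Hence everything reduces to showing $\mathscr{R}_N(\mathcal{G}) \le \lambda_T\,\mathscr{R}_N(\mathscr{L})$.

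For that reduction I would prove that, for each fixed $u$, the scalar map $L \mapsto g_u(\hat{w}_{L,T}(u))$ is Lipschitz on $\R^{d\times d}$ (with the Frobenius norm) with constant $\lambda_T$, and then apply a vector-valued contraction inequality — Maurer's vector contraction inequality — to pass from the $d^2$-dimensional Rademacher complexity of the matrix-valued class $\mathscr{L}$ to $\mathscr{R}_N(\mathcal{G})$. The factor $\sqrt{2}$ appearing in $\lambda_T$ is exactly the constant produced by that lemma. Since $g_u$ is $\beta$-Lipschitz by Assumption~\ref{ass:smooth}, it then suffices to bound the Lipschitz constant of $L \mapsto \hat{w}_{L,T}(u)$ by $\lambda_T/(\sqrt{2}\,\beta)$, i.e.\ by $\gamma D\,(1-\rho^T)/(1-\rho)$ with $\rho = 1-\gamma\sigma_{\min}+\alpha$ (taking the step size normalized, as in the statement).

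The core of the argument is therefore a sensitivity analysis of the $T$-step projected-gradient iterate with respect to $L$. I would unroll the recursion: starting from a common $w_0\in\mathcal{P}$, let $(w_t)$ and $(w_t')$ be the iterates generated by $L$ and $L'$, so that $w_{t+1}=\pi_{\mathcal{P}}\paren{w_t-\eta\nabla g_u(w_t)-\gamma L w_t}$ and analogously for $w_t'$, and set $\Delta_t=\norm{w_t-w_t'}$. Using that $\pi_{\mathcal{P}}$ is non-expansive, adding and subtracting $\gamma L w_t'$, the triangle inequality, the bound $\norm{w_t'}\le D$, and the fact that the map $w\mapsto w-\eta\nabla g_u(w)-\gamma L w = w-\eta\nabla r_u(w)$ is $\rho$-Lipschitz (the spectrum of $\nabla^2 r_u$ is controlled by the $\alpha$-smoothness of $g_u$ together with the eigenvalue bounds on $L$, exactly as in the proof of Proposition~\ref{prop:eigen}, with $\rho<1$ in the regime $\gamma\sigma_{\min}>\alpha$), I obtain the one-step estimate $\Delta_{t+1}\le \rho\,\Delta_t + \gamma D\,\norm{L-L'}_F$. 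Iterating from $\Delta_0=0$ gives the geometric sum $\Delta_T \le \gamma D\,\norm{L-L'}_F\sum_{t=0}^{T-1}\rho^t = \gamma D\,\norm{L-L'}_F\cdot\frac{1-\rho^T}{1-\rho}$; composing with the $\beta$-Lipschitzness of $g_u$ and the $\sqrt{2}$ from Maurer's lemma then yields exactly $\lambda_T \le \sqrt{2}\,\beta\gamma D\cdot\frac{1-(1-\gamma\sigma_{\min}+\alpha)^T}{\gamma\sigma_{\min}-\alpha}$.

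The step I expect to be the main obstacle is the $\rho$-Lipschitz bound on the gradient-plus-regularizer step map. Because $g_u$ is assumed only $\alpha$-smooth and \emph{not} strongly convex, all of the contractivity has to come from the $\gamma L w$ term, so one must argue that $\norm{I-\eta\nabla^2 g_u(w)-\gamma L}\le \rho$ holds uniformly over $w\in\mathcal{P}$ and over every $L\in\mathscr{L}$ — this is where the uniform lower bound $\sigma_{\min}$ on the eigenvalues of every matrix output by $\mathscr{L}$ is indispensable, and where the standing assumption $\gamma\sigma_{\min}>\alpha$ (needed both for $\rho<1$ and for the geometric series and the quotient in $\lambda_T$ to be meaningful) must be made explicit. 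A minor additional bookkeeping point is the $\gamma/\eta$ factor in the update rule defining $\hat{w}_{L,T}$: after normalizing the step size one recovers the constants as displayed. The remaining ingredients — the symmetrization bound, Maurer's vector contraction inequality, and the geometric-series estimate — are routine.
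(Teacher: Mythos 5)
Your proposal is correct and follows essentially the same route as the paper's proof: a Bartlett--Mendelson-style symmetrization bound on the composite class, Maurer's vector contraction inequality (the source of the $\sqrt{2}$), and an unrolled one-step sensitivity recursion $\Delta_{t+1}\le(1-\gamma\sigma_{\min}+\alpha)\Delta_t+\gamma D\,\norm{L-L'}$ using non-expansiveness of the projection, the diameter bound, and the eigenvalue floor, summed as a geometric series from $\Delta_0=0$. Your explicit flagging of the standing assumption $\gamma\sigma_{\min}>\alpha$ and of the $\eta$ normalization is a welcome clarification of points the paper leaves implicit.
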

For many hypothesis classes $\mathscr{L}$, we can bound $\mathscr{R}_N(\mathscr{L})$ by a term that converges to $0$ as $N\to\infty$ and at a rate $O(1/\sqrt{N})$ for common function classes like linear functions. See for example \cite{bartlett2002rademacher}.

\begin{corollary}[Generalization bound for $T\to\infty$]
As $T$ approaches infinity, the approximation $\hat{w}_{L,T}$ will converge to the optimal solution of the surrogate objective $r_u(w)$. In this case, the generalization bound simplifies to 
\begin{equation}
    C({L}) \leq \hat{C}({L}) + \frac{\sqrt{2}\beta \gamma D}{\gamma \sigma_\min - \alpha} \cdot \mathscr{R}_N(\mathscr{L}) + \sqrt{\frac{\log(1/\delta)}{N}}.
\end{equation}
as long as $1 + \alpha \geq \gamma\sigma_\min \geq \alpha$. 
\end{corollary}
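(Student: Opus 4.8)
\noindent\emph{Proof proposal.} This corollary is the $T\to\infty$ specialization of Theorem~\ref{thm1}, so the plan has two parts: prove Theorem~\ref{thm1}, then pass to the limit. For the limiting step, note that when $\alpha<\gamma\sigma_{\min}<1+\alpha$ the quantity $\rho:=1-\gamma\sigma_{\min}+\alpha$ lies in $[0,1)$, so $\rho^T\to0$ and $\lambda_T=\sqrt{2}\,\beta\gamma D\cdot\frac{1-\rho^T}{1-\rho}\to\frac{\sqrt{2}\,\beta\gamma D}{\gamma\sigma_{\min}-\alpha}$. Moreover, since the surrogate $r_u(w)=g_u(w)+\tfrac{\gamma}{2\eta}w^\top L w$ is $\mu$-strongly convex with $\mu\ge\gamma\sigma_{\min}/\eta>0$, Proposition~\ref{prop:eigen} together with $\tfrac{\mu}{2}\norm{w_T-\hat{w}_{L}(u)}^2\le r_u(w_T)-r_u(\hat{w}_L(u))$ forces $\hat{w}_{L(u),T}(u)\to\hat{w}_{L(u)}(u)$, whence $\hat{C}(L)$ and $C(L)$ converge to their $T=\infty$ analogues by $\beta$-Lipschitzness of $g_u$ and bounded convergence. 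Taking the limit in the bound of Theorem~\ref{thm1} then yields the corollary.

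It therefore remains to sketch Theorem~\ref{thm1}, which I would prove with the classical three-step recipe for uniform convergence --- McDiarmid concentration, symmetrization, and a contraction inequality --- the only non-standard ingredient being that the object ``peeled off'' is a $T$-step iterative map in the matrix $L$. Write the per-example loss as $\phi_u(M):=g_u(\hat{w}_{M,T}(u))$, so $C(L)=\E_u[\phi_u(L(u))]$ and $\hat{C}(L)=\tfrac1N\sum_n\phi_{u^n}(L(u^n))$. Since $g_u$ is $\beta$-Lipschitz and $\mathcal P$ has diameter $D$, $\phi_u$ has bounded range on $\mathcal P$, so $(u^1,\dots,u^N)\mapsto\sup_{L\in\mathscr{L}}\bigl(C(L)-\hat{C}(L)\bigr)$ has bounded differences and McDiarmid plus the standard symmetrization argument give, with probability $1-\delta$,
\[
\sup_{L\in\mathscr{L}}\bigl(C(L)-\hat{C}(L)\bigr)\;\le\;c\cdot\mathscr{R}_N(\mathscr{G})+\sqrt{\tfrac{\log(1/\delta)}{N}},\qquad \mathscr{G}=\{u\mapsto\phi_u(L(u)):L\in\mathscr{L}\},
\]
for a universal constant $c$; since $C(L)-\hat{C}(L)$ is dominated by this supremum for every $L$, the shape of the theorem follows once $\mathscr{R}_N(\mathscr{G})$ is controlled by $\mathscr{R}_N(\mathscr{L})$.

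The crux is bounding the Lipschitz constant of $M\mapsto\phi_u(M)$, uniformly in $u$. Fix $u$ and matrices $M,M'$, run the iteration from a common $w_0$ to get $w_t,w_t'$, and set $e_t=\norm{w_t-w_t'}$ with $e_0=0$. Nonexpansiveness of $\pi_{\mathcal P}$ and a mean-value expansion $\nabla g_u(w_t)-\nabla g_u(w_t')=H_t(w_t-w_t')$ with $0\preceq H_t\preceq\alpha I$ give
\[
e_{t+1}\le\norm{\bigl(I-\eta H_t-\gamma M\bigr)(w_t-w_t')}+\gamma\,\norm{(M-M')w_t'}\le\rho\,e_t+\gamma D\,\norm{M-M'},
\]
using $\norm{w_t'}\le D$ and $\norm{I-\eta H_t-\gamma M}\le\norm{I-\gamma M}+\eta\norm{H_t}\le(1-\gamma\sigma_{\min})+\alpha=\rho$ (with $\eta$ normalized and the spectral restrictions on $\mathscr{L}$ making $\norm{I-\gamma M}\le1-\gamma\sigma_{\min}$). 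Unrolling, $e_T\le\gamma D\,\norm{M-M'}\sum_{t=0}^{T-1}\rho^t=\gamma D\,\norm{M-M'}\cdot\frac{1-\rho^T}{1-\rho}$, so by $\beta$-Lipschitzness $|\phi_u(M)-\phi_u(M')|\le\ell_T\norm{M-M'}$ with $\ell_T=\beta\gamma D\cdot\frac{1-(1-\gamma\sigma_{\min}+\alpha)^T}{\gamma\sigma_{\min}-\alpha}$. The multidimensional Rademacher complexity in the Definition is exactly the object to which a vector-valued contraction inequality (Maurer) applies: composing this coordinatewise-$\ell_T$-Lipschitz map with $\mathscr{L}$ costs only a factor $\sqrt{2}\,\ell_T$, i.e.\ $\mathscr{R}_N(\mathscr{G})\le\sqrt{2}\,\ell_T\,\mathscr{R}_N(\mathscr{L})=\lambda_T\,\mathscr{R}_N(\mathscr{L})$; substituting into the display above gives Theorem~\ref{thm1}.

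The main obstacle is the contraction estimate $\norm{I-\eta H_t-\gamma M}\le\rho<1$: this is where the eigenvalue lower bound $\sigma_{\min}$, an (implicit) upper spectral restriction on the outputs of $\mathscr{L}$, the normalization of the step size $\eta$, and the comparison $\gamma\sigma_{\min}>\alpha$ must all be reconciled so that the per-step map is genuinely a contraction --- and it is this $\rho$ that propagates into the geometric factor $\frac{1-\rho^T}{1-\rho}$ defining $\lambda_T$ (and, in the limit, into the denominator $\gamma\sigma_{\min}-\alpha$ of the corollary). A secondary point requiring care is that one must invoke the \emph{vector}-valued contraction inequality matched to the multidimensional Rademacher complexity --- the source of the $\sqrt{2}$ --- rather than the scalar Talagrand lemma; by comparison the McDiarmid and symmetrization bookkeeping is routine.
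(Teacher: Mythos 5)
Your proposal is correct and follows essentially the same route as the paper: the corollary is obtained by letting $T\to\infty$ in Theorem~\ref{thm1}, whose proof in the paper likewise combines a Rademacher generalization bound (Bartlett--Mendelson) with the vector contraction inequality and the same induction $\lambda_{T+1}\le\gamma D+\lambda_T(1-\gamma\sigma_{\min}+\alpha)$ unrolled into the geometric factor $\frac{1-\rho^T}{1-\rho}$ with $\rho=1-\gamma\sigma_{\min}+\alpha$. Your added remarks --- the strong-convexity argument for $\hat{w}_{L,T}\to\hat{w}_L$ and the observation that an implicit upper spectral bound on $\mathscr{L}$ and the step-size normalization are needed for $\rho<1$ --- go slightly beyond what the paper writes down but do not change the argument.
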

In practice, we can choose both $\gamma$ and $\sigma_\min$. Recall $\sigma_\min$ is the smallest possible eigenvalue of any matrix $L$ that is the output of a function from $\mathscr{L}$. To control $\sigma_\min$ algorithmically, see section \ref{sec:control}. 


\begin{nicebox}
\textbf{Approximation framework.}
 The approximation $\hat{w}(u)$ is generated by $T$ iterations of the update rule given by
 \begin{equation}
     \hat{w}_{t+1} = \pi_{\mathcal{P}} \paren{\hat{w}_{t} - \eta \nabla g_u(w) - \gamma \cdot L(u)w}
 \end{equation}
 for $t = 0, \dots, T-1$, choosing $\hat{w}_0 = 0$, $\pi_{\mathcal{P}}$ the projection operator on the feasible region $\mathcal{P}$ and $L(u)$ a positive semidefinite matrix. We learn such  an $L(u)$ by solving \eqref{eq:final}
 \begin{equation}
 \label{eq:train}
    \min_{L \in \mathscr{L} } \sum_{n=1}^N g_{u^n} (\hat{w}_{L(u^n),T}(u^n)).
 \end{equation}
 where $\mathscr{L}$ defines the set of possible functions $L$. As examples in this paper, we will consider $\mathscr{L}$ to be the set of constant functions (that is, $L(u)$ is a constant matrix independent of $u$), and the set of linear functions of $u$. 
\end{nicebox}

\begin{proof}[Proof of Theorem \ref{thm1}]
 The results of \cite{bartlett2002rademacher} can be applied directly to the composite cost function $g_u(\hat{w}_{L(u),T}(u))$. We can view this as a composition $g_u \circ \hat{w}_{L, T} \circ \mathscr{L} $. Theorem 8 of \cite{bartlett2002rademacher} gives us, with probability $1-\delta$ over $N$ i.i.d. training data samples, that the following inequality holds for all $L \in \mathscr{L}$,
 \begin{equation}
 \label{eq:rad-thm}
 C(L) \leq \hat{C}(L) +  \mathscr{R}_N(g_u \circ \hat{w}_{L, T} \circ \mathscr{L}) + \paren{\frac{8\log 2/\delta}{N}}^{1/2}.
 \end{equation}
 Next, using the vector contraction inequality from \cite{bartlett2002rademacher}, we can further bound the Rademacher complexity by
 \begin{equation}
    \label{eq:r-bound}
     \mathscr{R}_N(g_u \circ \hat{w}_{L, T} \circ \mathscr{L}) \leq \sqrt{2} \lambda \mathscr{R}_N(\mathscr{L})
 \end{equation}
 where the function $g_u\circ \hat{w}_{L,T}$ is $\lambda$-Lipschitz. It remains to bound the lipschitz constant $\lambda$. Since $g_u(w)$ is $\alpha$-Lipschitz with respect to $w$ by assumption, we will first focus on the Lipschitz constant of $\hat{w}_{L,T}$ with respect to $L$. That is, we will show that for matrices $L_1, L_2$, and any $u$,
 \begin{equation}
    \norm{\hat{w}_{L_1, T}(u) - \hat{w}_{L_2, T}(u)} \leq \lambda_T \norm{L_1 - L_2}.
 \end{equation}
 
We do this by induction and first write a recurrence relation defining $\lambda_{T+1}$ in terms of $\lambda_{T}$. We begin by rewriting $\hat{w}_{L,T}(u)$ in terms of $\hat{w}_{L, T-1}(u)$. For ease of notation for the remained of the proof, we will rewrite $\hat{w}^{L_k}_{T} = \hat{w}_{L_k, T}(u), k=1,2$ and assume that $L_1, L_2$ as well as $u$ are fixed. We then have
\begin{equation}
     \norm{\hat{w}^{L_1}_{T+1} - \hat{w}^{L_2}_{T+1}} = \norm{\pi\paren{\hat{w}^{L_1}_{T} - \eta \nabla g_u(\hat{w}^{L_1}_{T}) - \gamma L_1\hat{w}^{L_1}_{T}} - \pi\paren{\hat{w}^{L_2}_{T} - \eta \nabla g_u(\hat{w}^{L_2}_{T}) - \gamma L_2\hat{w}^{L_2}_{T}}}.
\end{equation}
Note that any projection operator is non-expansive. Using this to remove the $\pi$ operator and rearranging terms give us 
\begin{equation}
\norm{\hat{w}^{L_1}_{T+1} - \hat{w}^{L_2}_{T+1}}  \leq \norm{
\paren{\paren{I - \gamma L_1}\hat{w}^{L_1}_{T} - \paren{I - \gamma L_2}\hat{w}^{L_2}_{T} } } 
+ \eta\norm{\nabla g_u(\hat{w}^{L_1}_{T}) - \nabla g_u(\hat{w}^{L_2}_{T})}.
\end{equation}
By assumption of $\alpha$-smoothness, the gradient of $g$ is $\alpha$-Lipschitz with respect to $w$. So, we can bound the right-most term above by 
\begin{align}
    \norm{\nabla g_u(\hat{w}^{L_1}_{T}) - \nabla g_u(\hat{w}^{L_2}_{T})} \leq&\ \alpha \norm{\hat{w}_{T}^{L_1} - \hat{w}_{T}^{L_1}} \\ 
    \leq&\ \alpha \cdot \lambda_{T}\norm{L_1 - L_2}. \label{eq:grad-lip}
\end{align}
We are left to bound the Lipschitz constant of the left term: 
\begin{align}
    \norm{
\paren{I - \gamma L_1}\hat{w}^{L_1}_{T} - \paren{I - \gamma L_2}\hat{w}^{L_2}_{T} } &\  \\ 
\leq &\ \norm{
\paren{\paren{I - \gamma L_1}\hat{w}^{L_1}_{T} - \paren{I - \gamma L_2}\paren{\hat{w}^{L_2}_{T} + \hat{w}^{L_1}_{T} -\hat{w}^{L_1}_{T}   } }} \nonumber \\ 
\leq &\ \norm{\gamma\paren{L_1 - L_2}\hat{w}^{L_1}_{T} + \paren{I - \gamma L_2}\paren{\hat{w}^{L_1}_{T} - \hat{w}^{L_2}_{T}}}  \\
\intertext{We can bound $\hat{w}_{T}^{L_1}$ by the diameter $D$ of the feasible region. Moreover, the minimum eigenvalue across all matrices $L$ is $\sigma_{\min}$. So, we can further bound this as}
\leq &\ \gamma D \norm{L_1 - L_2} + \paren{1 - \gamma \sigma_{\min} }\norm{\hat{w}^{L_1}_{T} - \hat{w}^{L_2}_{T}}\\
\leq &\ \gamma D \norm{L_1 - L_2} + \paren{1 - \gamma \sigma_{\min} }\lambda_T\norm{L_1 - L_2}.
\end{align}
Therefore, combining this with the Lipschitz term from \eqref{eq:grad-lip}, we find 
\begin{equation}
    \lambda_{T+1} \leq \gamma D + \lambda_{T} \cdot \paren{1 - \gamma\sigma_{\min} + \alpha }.
\end{equation}
Furthermore, as a base case $\lambda_0 = 0$ since after $T = 0$ iterations, $\hat{w}^{L_1}_{0} = \hat{w}^{L_2}_{0}$ since we always use the same initialization. Therefore, we can solve the recurrence relation to find 
\begin{equation}
    \lambda_T \leq \gamma D \cdot \frac{1 - \paren{1 - \gamma\sigma_{\min} + \alpha }^T}{ \gamma\sigma_{\min} - \alpha}
\end{equation}
which proves the theorem. 
\end{proof}
}








\subsection{ProjectNet: tractable algorithms}
\label{section:projectnet}

{\color{black}
We propose tractable learning methods of solving the problems introduced in the previous section. This consists of addressing two key points. (1) We present a differentiable method of projection onto linear constraints and (2) we present a method to control the eigenvalues of the matrix $L(u)$. We denote this model as ProjectNet. Finally, (3) we integrate the ProjectNet model into the end-to-end framework.
}



\subsubsection{Ensuring Feasibility}
\label{sec:ensuring feasibility}

{\color{black}
    First, we discuss how to perform the projection operator $\pi_{\mathcal{P}}$ as part of computing $\hat{w}_{r,T}(u)$. Projection itself is a difficult optimization problem given by $\pi(w) = \arg \min_{y \in \mathcal{P}} \norm{w - y}^2$.
}
We resolve this issue by performing an approximate projection as follows. The only requirement is that each individual projection $\pi_j$ can be done by a differentiable method. We only assume that projection onto a single constraint can be done through a differentiable method. We use Dykstra's projection algorithm \citep{Dykstra} that provides a sequence of differentiable steps to approximate the projection. Let $\mathcal{P}_1, \dots, \mathcal{P}_J$ be any $J$ intersecting convex sets that make up the feasible region. For example, $\mathcal{P}_i = \{ w : h_i(w) \leq 0 \}, \mathcal{P}_{j+p_1} = \{ l_j(w) = 0\}$.  We cyclically project onto $\mathcal{P}_1$ through $\mathcal{P}_J$ until we reach some desired accuracy (distance from satisfying both constraints). We define $\pi_j$ as the projection operators onto sets $\mathcal{P}_j$. After $k$ steps of the iterative projection, we reach an approximation $\tilde{\pi}^k$ defined in Algorithm~\ref{projection-alg}. That is, one must be able to compute $\partial \pi_j(w) / \partial(w)$. For example, for linear optimization problem, we let $\mathcal{P}_1 = \{ w : Aw = b  \}, \mathcal{P}_2 = \{ w : w \geq 0 \}$, so that $\mathcal{P} = \mathcal{P}_1 \cap \mathcal{P}_2$. The projections $\pi_1,\pi_2$ can be  evaluated easily as follows:
\begin{alignat}{2}
\label{eq:projections}
    \pi_1(w) =&\ \arg \min_{y : Ay=b} \norm{w - y}_2^2 \\ 
    =&\ \ w - A^T(AA^T)^{-1}(Aw - b) \\ 
    \pi_2(w) =&\ \arg \min_{y \geq 0} \norm{w - y}_2^2  
             =\ ReLu(w).
\end{alignat}
In the case of linear subspaces, the method simplifies to $\tilde{\pi}^k(w) = \pi_2(\pi_1(\dots(\pi_2(\pi_1((w))\dots))$. This is depicted in Fig.~\ref{fig:projection}. For example, in the case of a polyhedral feasible region, 
the sequence of points $w_k$ is guaranteed to converge to a point in $\mathcal{P} = \mathcal{P}_1 \cap \mathcal{P}_2$ at a geometric rate. In particular, \citet{Deutsch1994} show that there exists $\rho < 1, a > 0$ so that for any integer $k$:
\begin{equation}
\norm{\tilde{\pi}^k(w) - \pi(w)}_2 \leq a \cdot \rho^k,
\end{equation}
where $\pi(w)$ is the exact projection of $w$ onto the feasible region $\mathcal{P}$. However, as the sequence of projections converges closer to a vertex, the corresponding gradients $\partial \tilde{\pi}^k(w) / \partial w$ will also approach zero. Indeed, in the limit, if we have exact projections onto vertices of the feasible polytope, then the gradient is zero. This suggests that one must be careful when choosing the number of iterations $k$ so that they are large enough to provide good approximations, but at the same time consider the trade-off in keeping the corresponding gradients from becoming too small.

\begin{figure}
    \centering
  \includegraphics[scale=0.5]{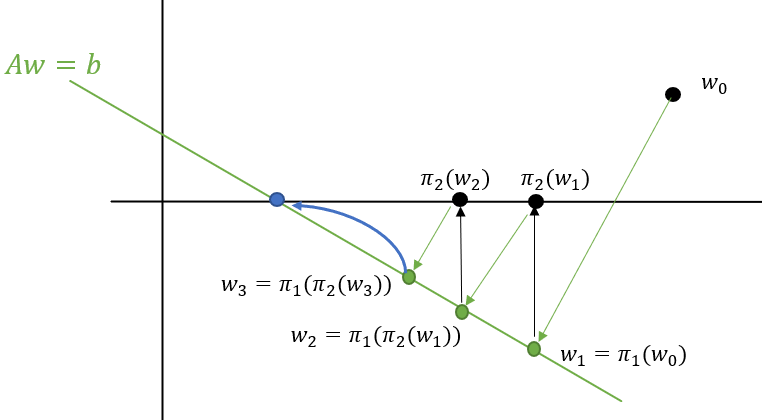}
      \caption{Iterative projection method. The sequence of projections converges to a feasible solution (depicted as the blue point).}
\label{fig:projection}
\end{figure}

\begin{algorithm}
\caption{Dykstra's Projection Method}
\label{projection-alg}
\begin{algorithmic}[1]
\Function{$\tilde{\pi}^k$}{$w$}
\State Initialize $w_J^{0} = w$, and $p_1^{0} = \dots = p_J^{0}=0$. 
\For{$t = 1, \dots, k$}
    \State $w_0^{t} = w_{d}^{t-1}$
    \For{$j = 1, \dots, J$}
        \State $w_j^{t} = \pi_j(w_{j-1}^t + z_j^{t-1})$
        \State $z_j^{t} = w_{j-1}^t + z_{j}^{t-1} - w_{j}^k$
    \EndFor
\EndFor
\State \Return $w_J^{k}$
\EndFunction
\end{algorithmic}
\end{algorithm}

\subsubsection{Controlling eigenvalues}
\label{sec:control}

{\color{black} We now focus on controlling the eigenvalues of $L(u)$. We do so as follows. We propose any auxiliary neural network model which outputs two quantities, an upper triangular matrix $M(u)$ and a diagonal matrix $D(u)$. Then, 
\begin{enumerate}
    \item  The resulting matrix $M(u)M(u)^T$ is symmetric and invertible as long as the diagonal entries of $M(u)$ are positive. 
    \item The matrix $(M(u)M(u)^T) D(u) (M(u)M(u)^T)^{-1}$ exists and moreover its eigenvalues are equal to the diagonal entries of $D(u)$.
    \item To control all eigenvalues to be between $ \underline{\lambda}$ and $\overline{\lambda}$, we can apply the transformation $ \rho(\cdot) $ is the sigmoid activation function appropriately scaled and translated. So, we can set 
    \begin{equation}
        L(u) = (M(u)M(u)^T) \rho(D(u)) (M(u)M(u)^T)^{-1}.
    \end{equation}
    where again $M(u), D(u)$ can be generated from any choice of neural net architecture. The inverse operation is differentiable, and implemented in most existing software. 
\end{enumerate}
    
}

\subsubsection{Model Architecture} 
\label{sec:model}
{\color{black}
Recall from section \ref{sec:analytical} our goal is to learn a function $L(u)$ which outputs an update rule by solving 
\begin{equation}
    \min_{L} \sum_{n=1}^N g_{u^n}(\hat{w}_{L(u^n)}(u^n)).
\end{equation}
To solve this, we make two approximations to make the formulation tractable. Instead of using $\hat{w}_{L}$, the point of convergence, we instead use $\hat{w}_{L, T}$ for some choice of $T$ iterations. Moreover, instead of using the exact projection $\pi$ when computing a single step of the iterative process, we use $\tilde{\pi}_{\mathcal{P}}$ from section \ref{sec:ensuring feasibility} instead. This can be seen in the architecture of the ProjectNet in figure \ref{fig:model}.
}

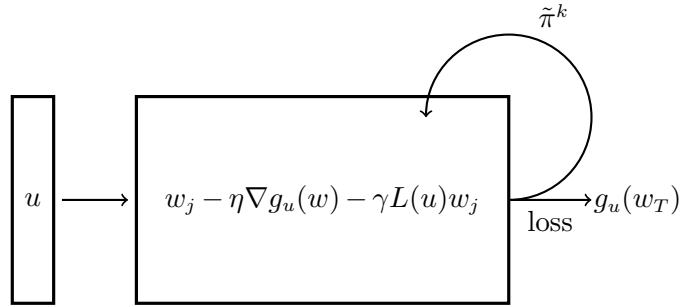
\begin{figure}[b]
  \centering
  \begin{tikzpicture}[scale=0.55]
    \draw[very thick] (0,0) rectangle (1,5) node[pos=.5] {$u$};
    \draw[thick, ->] (1.2 , 2.5) -- (2.8,2.5);
    \draw[very thick] (3,0) rectangle (12,5) node[pos=.5] 
    {$ w_j - \eta \nabla g_u(w) - \gamma L (u)w_j$};
    \draw[thick, ->] (12,2.5) arc (-90:180:2) node[above=3.5em,right=3.5em] {$\tilde{\pi}^k$};
    \draw[thick, ->] (12, 2.5) -- (14, 2.5) node[midway, below] {loss};
    \draw[draw=none] (14.5,2) rectangle (15.5,3) node[pos=.5] { $\ g_u(w_T)$};
\end{tikzpicture}    
    \caption{ProjectNet architecture}
    \label{fig:model}
\end{figure}

\paragraph{Improvement over gradient descent.} We now show the runtime improvements of our proposed approach compared to traditional projected gradient descent. We show additional computational results for a different non-linear problem in section \ref{sec:electricity}. For ease of clarity, we add the full details of the setup in Appendix \ref{app:matching}. We present computational results comparing the two approaches on a maximum matching problem with $n=50$ nodes and $n^2 = 2,500$ edges/variables.

We also train a ProjectNet model with $T_0 = 5$ iterations, and compare the objective value of its solution for iterations up to $T_1 = 35$ on testing data. See figure~\ref{fig:projectnet vs gd}. In particular, we measure the average relative regret of decisions. That is, given realized edge weights $u$ and decision $\hat{w}$, the relative regret is the percent difference in objective between the objective of $\hat{w}$ and the optimal decision in hindsight:
  $  (g_u(\hat{w}) - g_u(w^*(u)) / g_u(w^*(u) $.
We see that indeed the ProjectNet method improves consistently in accuracy as the number of iterations $T_1$ is extended from the $T_0$ steps that were used during training.

Note that when compared to the traditional gradient descent approach, the ProjectNet approach performs better using fewer iterations.
It maintains this edge even for steps $T_1 > T_0$, which it has not trained upon. However for larger $T_1$ traditional gradient descent is better able to converge to the optimal solution and it overtakes the ProjectNet method. But for the end-to-end framework it is beneficial to use a smaller number of iterations $T_1$ since this is computationally more efficient, and keeps the gradient $\nabla \hat{w}(u)$ from approaching zero. In the regime of smaller $T_1$, the ProjectNet method also has an advantage in terms of objective function value, with up to $12.5\%$ improvement. See Figure~\ref{fig:improvement}. 

\begin{figure}[htb]
    \centering
\begin{subfigure}{.45\textwidth}
  \centering
    \includegraphics[scale=0.45]{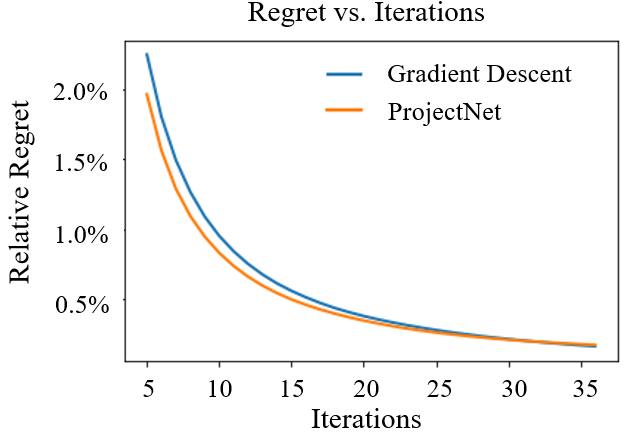}
    \caption[size=11pt]{Regret of ProjectNet compared to gradient descent as iterations $T$ increase. 
    }
    \label{fig:projectnet vs gd}
\end{subfigure}\hfill
\begin{subfigure}{.45\textwidth}
    \includegraphics[scale=0.4]{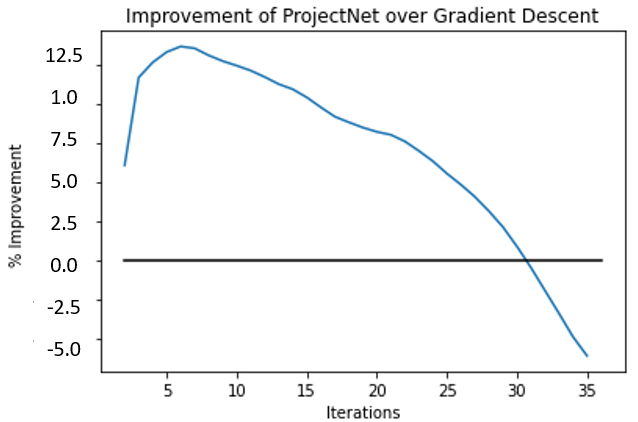}
    \caption{Percent improvement in relative regret of the ProjectNet model compared to gradient descent. 
    }
    \label{fig:improvement}
\end{subfigure}
\caption{
Comparison of ProjectNet to Gradient Descent }
\label{fig:matching}
\end{figure}

\subsection{End-to-End Learning via ProjectNet}

First, ProjectNet is trained to learn solutions to the optimization problem $w^*(u)$. In particular, given some cost vectors $u^1, \dots, u^N$, we aim to learn some $\hat{w}$ parametrized by the update linear layer $L$ which minimizes empirical cost as introduced in \eqref{eq:train}. 

Note that we never need to solve the nominal optimization problem $w^*(u^n)$ during this process. In addition, we may use any data $u^1, \dots, u^n$ that we wish, not necessarily only the vectors from the original training data of $(x^n, u^n)$. We can train using, say, $T_0$ iterations of the recurrent network. The entire end-to-end method to learn forecasts $f_\theta(\cdot)$ is now described in Algorithm~\ref{alg} which follows the steps in diagram \ref{fig:deterministc_diag} shown earlier. In short, we make a forecast $f_\theta(x^n)$ and differentiate through the approximate corresponding solution $\hat{w}(f_\theta(x^n))$ to update $\theta$. During the training step of ProjectNet, we may use $T_0$ iterations, while when subsequently evaluating $\hat{w}(f_\theta(x_n))$, we may use any $T_1 \geq T_0$ iterations to improve the accuracy of the ProjectNet's approximation. 
\begin{algorithm}[bt]
\caption{End-to-End Learning via ProjectNet}
\label{alg}
\begin{algorithmic}
\Function{ProjectNet End-to-End}{$(x^1, u^1), \dots, (x^N, u^N)$}
\State $\hat{w}(\cdot) \leftarrow$ \textproc{TrainProjectNet}()
\State Initialize $\theta$ at random. 
\For{each epoch}
    \For{$n = 1, \dots, N$}
        \State Compute $\nabla_\theta g_{u^n}( \hat{w}(f_\theta(x_N)))$.
        \State Update $\theta$ by any gradient method.
    \EndFor
\EndFor
\State \Return $\theta$
\EndFunction
\end{algorithmic}
\end{algorithm}

\section{Extension to Two-Stage Stochastic Optimization}

\label{section:stochastic-programming}
\label{section:2-stage}

We now consider two-stage linear stochastic optimization problems.  Let $w$ denote the first-stage decision and let $V(w,u)$ denote the second-stage cost of decision $w$ under the realization $u$ of uncertainty. As an example, we can consider a multi-warehouse cross-fulfillment newsvendor problem. The first-stage decision $w$ is the amount of product to allocate to each warehouse. After the decision is made, the demand $u$ is realized, and finally one must fulfill the demand using the initial stocking decision. This would correspond to $V(w,u)$ being a form of a matching problem (which warehouse should fulfill which client). For details and computational results, see section~\ref{sec:cross-fulfillment}.

In general we assume the first stage problem can be formulated as 


\begin{equation}
\begin{array}{lll}
    w^*(D) = & \arg\min_{w} &\ c^Tw + \mathbb{E}_{u\sim D}\left[ V(w,u) \right]  \\ \\
& \text{subject to}
        &\ Aw = b \\ \\
        & &\  w \geq 0. 
\end{array}
\end{equation}


\noindent As for the second stage problem, we allow the uncertainty to impact either the objective or the constraints, and these could depend on the decision $w$ taken in the first stage as well. The problem takes the general form as below, where $w$ is the second-stage decision variable, $T(w,u)$ is some matrix which depends on the first-stage decision $w$ and on the realization $u$:
\begin{equation}
\begin{array}{lll}
    V(w,u) =& \min_{v} &\ d(w,u)^Tv \\ \\ 
& \text{subject to}
        &\ T(w,u) \begin{bmatrix}w \\ v\end{bmatrix}   =  h(w, u)\\ \\
        && v \geq 0.
\end{array}
\label{eq:2ndstage}
\end{equation}

\begin{assumption_}
We assume the model has relatively complete recourse.\end{assumption_}
That is, any feasible $w$ in the first stage leads to a feasible second stage problem for any uncertainty realization.
This assumption is often satisfied, for example, this is the case in the traditional and cross-fulfilment newsvendor problem (see for example \citet{recourse1}, \citet{birge2011introduction}).


Suppose we observe features $x^n$ and corresponding realizations of uncertainty $u^n$, for $N$ data points, $n = 1, \dots, N$. Given a point forecast $f_\theta(x^n)$, the corresponding decision is $w^*(f_\theta(x^n))$. Afterwards, the value of the uncertainty $u^n$ is realized and we can determine the cost to be $Z(w^*(f_\theta(x^n)), u^n)$, where $Z(w,u) := c^Tw + V(w,u)$. Hence, the cost minimization problem to learn $\theta$ is as follows
\begin{equation}
    \min_\theta \sum_{n=1}^N Z(w^*(f_\theta(x_n)), u_n) = \min_\theta \sum_{n=1}^N c^T(w^*(f_\theta(x_n))) + V(w^*(f_\theta(x_n)), u_n).
\end{equation}





An additional difficulty in solving this problem is that there are nested optimization problems. That is, computing $w^*(f_\theta(x))$ and passing its solution as input to problem $V$. 
Therefore, to further simplify this problem, consider making a first-stage decision $w$ directly from data by some $q_\vartheta(x)$ instead of making an intermediate forecast. Note that this problem differs from the learning problem in the previous section. Here, the goal is to learn a decision rule $q_\vartheta(x)$, whereas previously we made intermediate forecasts { $f_{\theta}(x)$, which was then used to solve the single stage optimization problem of interest to obtain $w^*(f_{\theta}(x))$}. 
We then have
\begin{equation}
\label{eq:Z}
     \min_\vartheta \sum_{n=1}^N c^Tq_\vartheta(x_n)  + V(q_\vartheta(x_n), u_n).
\end{equation}
The difficulty now lies in taking the gradient $\nabla_w Z(w,u) = c + \nabla_w V(w,u)$, since $V$ is a complex optimization problem. Therefore, we propose to use ProjectNet to learn this $V(w,u)$. Specifically, it learns the second-stage decisions $\hat{v}$ in problem (\ref{eq:2ndstage}). Then, we can approximate $V(w,u)$ by $d(w,u)^T\hat{v}$. Notice that this problem of approximating $\hat{v}$, is the same as the ProjectNet problem described for the deterministic problem in \ref{nominal-task}. 
Finally, we must ensure that the decisions $q_\vartheta(x)$ satisfy first-stage feasibility constraints $Aq_\vartheta(x_n) = b, q_\vartheta(x_n) \geq 0$. This may be accomplished by applying our approximate projection method.  

The primary difference in this case compared to the initial single-stage problem we presented in section \ref{sec:end-to-end} is that the uncertainty can lie in the constraints. However, this does not pose any problems, as the output of our proposed ProjectNet architecture is still differentiable with respect to parameters in the constraints. First, let us rewrite the update function of the ProjectNet for the second-stage problem. In this case, we aim to learn the optimal second-stage variables $v$ and recall the objective function is  linear $d(w,u)^Tv$, so the gradient is always $d(w,u)$. Finally, we have
\begin{equation*}
    v_{j+1} = \tilde{\pi}^k \left(v_j - \eta d(w,u) - \gamma L (v_j) \right)
\end{equation*}
and we notice that the constraints only appear in the approximate projection $\tilde{\pi}^k$. From Algorithm~\ref{projection-alg} and Equations~(\ref{eq:projections}) one can see $\tilde{\pi}^k$ is a sequence of steps differentiable with respect to the constraints. Hence, the gradient of the approximation $d(w,u)^Tv_j$ is always well-defined.



\paragraph{Point Forecasts vs. Distributional Forecasts} In the same spirit as for single-stage problems, it is sufficient to make point forecasts with an end-to-end approach when the objective $Z(\cdot,\cdot)$ is a loss-type function:


\begin{prop}
\label{prop:point-forecast}
Consider a two-stage stochastic optimization problem as described in section \ref{section:stochastic-programming}, with loss-type objective function $Z(w, u)$. That is, $Z(w,w) = 0$ for feasible $w$. Then, for any distributional forecast, there exists a single point forecast that produces the same solution. In other words, for any distribution $D$, there exists a single point forecast $d$ so that 
\begin{equation}
    \arg \min_{w} \E_{u \sim D} [ Z(w,u) ] = \arg \min_{w} Z(w,d).
\end{equation}
\end{prop}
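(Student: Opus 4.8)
The plan is to reuse, almost verbatim, the argument behind Proposition~\ref{prop:point vs. distribution}: once $Z$ is regarded as a black-box loss-type objective, the nested / two-stage structure plays no role. First I would let $d$ be an optimal solution of the stochastic problem, i.e.\ $d \in \arg\min_{w} \E_{u\sim D}[Z(w,u)]$, and then take the point forecast to be exactly $\hat u = d$. Under this deterministic forecast the induced first-stage decision is, by definition of the downstream problem, $w^*(d) = \arg\min_{w} Z(w,d)$. The one observation that does all the work is that $Z$ being loss-type forces $w^*(d) = d$: since $Z(d,d) = 0$ and $Z(w,d) \ge 0$ for every feasible $w$, the value $0$ is the minimum of $w \mapsto Z(w,d)$ and it is attained at $w = d$. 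Hence the decision produced by the point forecast $d$ is $d$ itself, which is exactly the stochastic optimum, giving the claimed equality $\arg\min_{w} \E_{u\sim D}[Z(w,u)] = \arg\min_{w} Z(w,d)$.

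The only place I would be careful is the reading of ``loss-type.'' The definition as stated ($Z(w,w)=0$ for feasible $w$) must be understood together with nonnegativity of $Z$ — equivalently, that $w \mapsto Z(w,u)$ attains its minimum, of value $0$, at $w=u$ — which is precisely what the over/under-utilization penalty structure of resource-management and newsvendor problems provides, and precisely the hypothesis used in the single-stage Proposition~\ref{prop:point vs. distribution}. With that reading in hand, no convexity of $V$, no relatively-complete-recourse assumption, and no linearity of the second stage are actually needed for this statement; the proof is purely about the shape of $Z(\cdot,u)$.

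Consequently there is no real technical obstacle here — the ``hard part'' is just bookkeeping on the definitions. If one wants the conclusion to hold for genuinely set-valued $\arg\min$'s, I would phrase it as: there exists a point forecast $d$ (namely any minimizer of the stochastic objective) whose optimal first-stage set contains that minimizer, and the two $\arg\min$ sets coincide whenever the loss $Z(\cdot,u)$ has a unique minimizer at $u$; otherwise one simply selects the common point $d$ on both sides. I would also remark, in passing, that this is what makes point forecasts attractive in the two-stage setting: the downstream problem reverts to a single deterministic two-stage evaluation at $d$ rather than an expectation over (possibly exponentially many) scenarios.
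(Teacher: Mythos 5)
Your proposal is correct and follows essentially the same route as the paper's own proof: take $d$ to be the minimizer of the stochastic objective and observe that the loss-type property forces $\arg\min_w Z(w,d) = d$. Your added remarks — that $Z(w,w)=0$ must be read together with $Z(\cdot,u)\ge 0$ (i.e., the minimum of $Z(\cdot,u)$ is attained at $u$), and the caveat about set-valued $\arg\min$'s — make explicit what the paper uses implicitly when it says ``the minimum is achieved at $Z(w,w)$,'' but they do not change the argument.
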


\begin{proof}
Let $w^*$ be the solution to the problem using distributional forecast $D$:
\begin{equation}
    w^* = \arg \min_{w} \E_{u \sim D} [ Z(w,u) ].
\end{equation}
Now consider making a forecast of exactly $d = w^*$. Then, $w^* = \arg \min_{w} Z(w,d)$, since $Z(\cdot, \cdot)$ is a loss function (i.e., the minimum is achieved at $Z(w,w)$). 
\end{proof}

\section{Computational Results}
\label{section:experiments}

{\color{black}
In this section we present computational results illustrating that the ProjectNet method introduced in this paper is effective in several end-to-end learning settings. We show this on several tasks: (1) a two-stage multi-warehouse cross-fulfillment newsvendor problem in which the first stage consists of allocating supply to many warehouses, and the second consists of optimally fulfilling the realized demand, (2) a real-world electricity planning problem (3) a shortest path problem in which the forecasting step is a computer vision task of predicting edge costs from terrain maps. Moreover, we include additional synthetic experiments on several variations of the multi-item newsvendor problem having linear or quadratic costs and capacity constraints in Appendix \ref{appendix:newsvendor}.
}

\subsection{Multi-warehouse cross-fulfillment newsvendor}
\label{sec:cross-fulfillment}

As an illustration of a two-stage stochastic optimization problem, we consider a multi-warehouse newsvendor problem with cross-fulfillment. 
We consider a setting of $n$ warehouses and $m$ clients with unknown future demand. In the first stage, one must decide on the amount of product to allocate to each individual warehouse. In the second stage, the demand at each client is realized and one must determine the optimal plan to fulfil the demand given the decisions made. In particular, there are traveling unit costs $c_{ij}$ to transport a unit of product from warehouse $i$ to client $j$. For every unit of unmet demand at client $j$, there is a backorder unit cost of $b_j$ and for every unit of product left unused at a warehouse $i$ there is a holding unit cost of $h_i$.

Let $w_i$ denote the first-stage decision of amount of product to allocate at warehouse $i$ and $V(w, d)$ the minimum cost of fulfilling a demand of $d = (d_1, \dots, d_m)$ for the clients. 
\begin{equation}
\begin{array}{lll}
     V(w, d) =& \min_{v \geq 0} &\ \sum_{i=1}^n\sum_{j=1}^m c_{ij} v_{ij} + \sum_{j=1}^m b_j \paren{d_j - \sum_{i=1}^n v_{ij}}^+ + \sum_{i=1}^n h_i \paren{\sum_{i=1}^n v_{ij} - s_i}^+ \\ \\
& \text{subject to}
        &\ \sum_{i=1}^n v_{ij} \leq w_i
\end{array}
\end{equation}
Finally, there is a unit cost of $c$ of allocating a single product to any warehouse. Hence, the cost of the first and second stages when making decision $w$ against future realization of $d$ is
\begin{equation}
    Z(w,d) = c \cdot \sum_{i=1}^n w_i + V(w, d)
\end{equation}
We assume we are given data $(x^1, d^1), \dots, (x^N, d^N)$ consisting of observed features $x^n$ and corresponding demand realization of $d^n$. We generate this data as follows. Each $x^n$ is drawn from a normal gaussian distribution, and $d^n$ is given by a deterministic quadratic function of $x^n$. In particular, $(d^n)_j = (q^Tx^n)^2_j$ for a fixed vector $q$ which is initially generated at random.
To learn the decision rule $f_\theta(x)$, we solve
\begin{equation}
    \min_\theta \sum_{n=1}^N Z(f_\theta(x^n), d^n).
\end{equation}
The primary difficulty lies in calculating $V(w,d)$ and the gradient $\partial V(w,d) / \partial w$. We use ProjectNet to approximate these.

\begin{table}[t]
    \centering
\begin{tabular}{c c c c}
    \toprule
     \# locations & Predict-then-Optimize & ProjectNet & OptNet \\ \midrule
     20 & 7.58 & 2.94 / 27s & 2.93 / 60s \\ 
     40 & 9.95 & 3.54 / 88.6s & 3.52 / 1200s \\
     \bottomrule
\end{tabular}
    \caption{Cross-fulfilment newsvendor results. Recording average cost on test set and average running time per epoch.}
    \label{tab:cf}
\end{table}

\begin{figure}[b]
    \centering
    \includegraphics[scale=0.5]{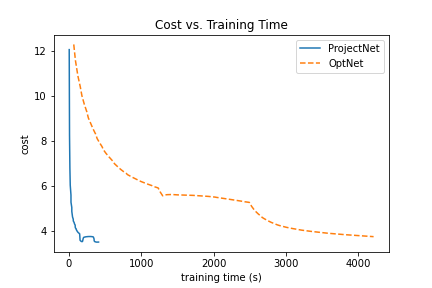}
    \caption{Task-based cost as a function of training time in the cross-fulfillment problem with 40 nodes. 
    }
    \label{fig:cf-runtimes}
\end{figure}

We  compare with the traditional predict then optimize method which separates the prediction and optimization, as well as the end-to-end OptNet method \citep{optnet}. For the predict-then-optimize method we simply learn a forecasting model which aims to minimize the mean-squared error. Since the demand distribution is a deterministic function of the features, this method alone would retrieve the optimal solution given infinite data. Given there will inherently be some error in the model, this will not happen and we see in the experiments that the end-to-end methods greatly outperform this approach. We adopt a similar decision rule idea for two-stage decision problems that we proposed in Section~\ref{section:2-stage} for the ProjectNet method in the case of the OptNet method. We use the following formulation to calculate the fulfilment cost in the OptNet method after the allocation decision $w$ has been made and the demand $d$ is realized:
\begin{equation}
\begin{array}{lll}
     V_{\text{optnet}}(w, d) =& \min_{v,q,p \geq 0} &\ \sum_{i=1}^n\sum_{j=1}^m c_{ij} v_{ij} + \sum_{j=1}^m b_j q_j  + \sum_{i=1}^n h_i p_j + \alpha (\norm{v}^2 + \norm{p}^2 + \norm{q}^2) \\ \\ 
& \text{subject to}
        &\ q_j \geq d_j - \sum_{i=1}^n v_{ij} , \quad  p_j \geq \sum_{i=1}^n v_{ij} - s_i , \quad  \sum_{i=1}^n v_{ij} \leq w_i 
\end{array}
\end{equation}
where again we introduce variables $p,q$ to denote the amount of overstocked or understocked units in order to linearize the objective, respectively. Again, we choose the regularization term $\alpha = 0.01$.

In Table~\ref{tab:cf} we observe that both end-to-end methods clearly outperform the predict then optimize baseline in terms of accuracy. Yet again in this setting we observe a significant decrease in the training time for ProjectNet over Optnet, running more than twice as fast for a 20-location problem. As the problem size grows to double (40 locations), the running time of OptNet increases significantly, nearly twenty times, while the ProjectNet method only increased threefold. For the 40 location example, we see in Figure~\ref{fig:cf-runtimes} the comparison of the cost of the decisions made by each approach as a function of the training time.


{\color{black}
\subsection{Electricity planning}
\label{sec:electricity}

We now consider an electricity generation and planning problem using data from PJM, an electricity routing company coordinating
the movement of electricity throughout 13 states. Our objective is to plan electricity generation over the next 24 hours of the data. The operator incurs a unit cost $\gamma_e$ for excess generation and a cost $\gamma_s$ for shortages. The cost of generating $w_1, \dots, w_{24}$ while true demand is $u_1, \dots, u_{24}$ is given by
$
    g_u(w) = \sum_{i=1}^{24} \gamma_s \max\{ u_i - w_i, 0\} + \gamma_e \max\{ w_i - u_i, 0 \} + 1/2 (w_i-u_i)^2$. Moreover, there are additional ramp-up constraints, that the  generation from one hour to the next cannot differ by more than $r = 0.4$. The constraints are given by $ |w_{i+1} - w_{i}| \leq r, \ i = 1, \dots 23$ and $w_i \geq 0$.
\begin{figure}[t]
    \centering
\begin{subfigure}{.33\textwidth}
    \includegraphics[scale=0.33]{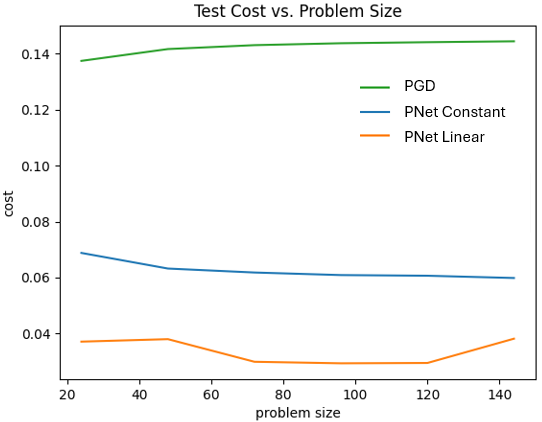}
    \caption{Problem size cost comparison. 
    }
    \label{fig:prob-size}
\end{subfigure}\hfill
\begin{subfigure}{.66\textwidth}
    \centering
    \begin{tabular}{c c c c c}
        \toprule
        Size & ProjectNet & ProjectNet Linear & CVXPY & OptNet \\ \midrule
        24 & 0.12 & 0.09 & 2.32 & 1.54 \\ 
        48 & 0.103 & 0.11 & 2.64 & 3.14 \\ 
        72 & 0.108 & 0.164 & 2.59 & 8.41 \\ 
        96 & 0.11 & 0.33 & 2.59 & 19.06 \\ 
        120 & 0.10 & 0.18 & 2.67 & 21.78 \\ 
        144 & 0.07 & 0.27 & 3.19 & 35.47 \\ \bottomrule
    \end{tabular}
    \caption{Running time (in seconds) of each approach with increasing problem size.}
    \label{tab:runtime}
\end{subfigure}
\caption{
ProjecetNet accuracy and runtime on electricity scheduling. 
}
\label{fig:electricity-res}
\end{figure}

\paragraph{ProjectNet accuracy and runtime} We first focus on the accuracy of the ProjectNet model to approximate the optimization problem as well as the runtime required. We compare against traditional projected gradient descent  (PGD). In addition, we consider two versions of ProjectNet, one where $L(u)$ is a constant function (using the same matrix $L$ for all $u$, we denote this PNet Constant in the table) and where $L(u)$ is a linear function of $u$ (we denote this as ProjectNet Linear in the table). We also compare against the runtime of OptNet and the CVXPY layer developed in \cite{agrawal2019differentiable}. We perform experiments along multiple axes. First, as we increase the amount of training data available, and second as we increase the size of the optimization problem. We increase the size by increasing the planning horizon from one day up to 5 days (hence, having to solve an optimization problem from 24 to 120 variables). All methods (our two versions of ProjectNet, Projected Gradient Descent (PGD), OptNet, and the CVXPY approach) will run 10 update iterations for its approximations.

In figure \ref{fig:data} we see the effect of the amount of data on the accuracy of solutions generated by ProjectNet on the test data from the electricity scheduling problem. As expected, accuracy improves as data increases, and ultimately begins to plateau. We present results for both learning a constant matrix $L$ (PNet Constant) and learning a function $L(u)$ (PNet Linear). Changing the matrix $L$ depending on $u$ gives the model more flexibility to learn better approximations and has over 50\% lower cost than using a constant $L$.

\begin{figure}[t]
    \centering
    \includegraphics[width=0.45\linewidth]{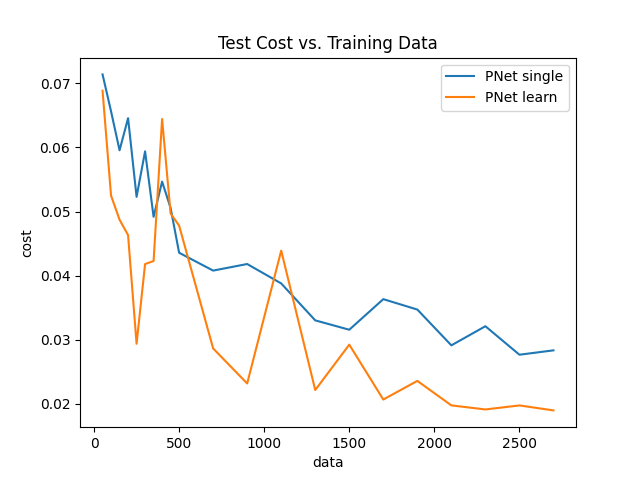}
    \caption{Average cost of approximate solutions for both approaches of ProjectNet, with constant $L$ and linear function $L(u)$.}
    \label{fig:data}
\end{figure}

\begin{figure}[b]
    \centering
\begin{subfigure}{.5\textwidth}
    \includegraphics[width=0.75\linewidth]{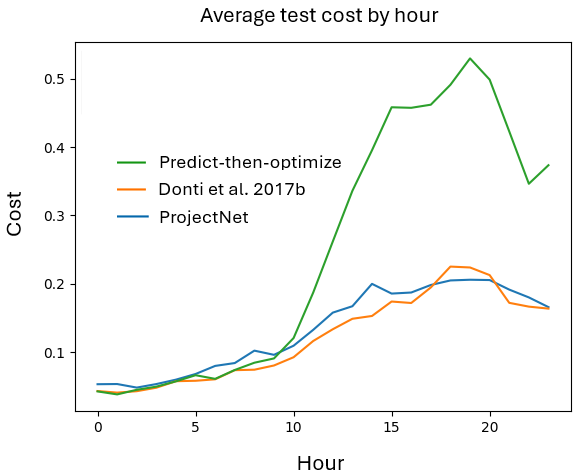} 
    \caption{Average test cost by hour on electricity problem. 
    }
    \label{fig:hour-comparison}
\end{subfigure}\hfill
\begin{subfigure}{.5\textwidth}
    \centering
    \begin{tabular}{c c}
        \toprule
        & Runtime per epoch \\ \midrule
        ProjectNet &  1.45 (s) \\ 
        \cite{donti2017task} & 5.33 (s) \\ \bottomrule
    \end{tabular}
    \caption{Running time (in seconds) of each approach with increasing problem size.}
    \label{tab:runtime-elec}
\end{subfigure}
\caption{
ProjecetNet accuracy and runtime on electricity scheduling. 
}
\label{fig:electricity-end-to-end}
\end{figure}

\paragraph{End-to-end results} Finally, we implement the ProjectNet into the end-to-end framework to train a model to make predictions and corresponding decisions. We use a two-layer (each layer of width 200) network with an additional residual connection from the input to the output layer. In addition, we also perform data augmentation, creating new features like non-linear functions of the temperature, one-hot-encodings of holidays and weekends, and yearly sinusoidal features. The setup is similar to that used in \cite{Donti2017}. The same model and data are used for all models. Hyperparameters are chosen to be the same as parameters chosen from the original paper that introduced the dataset and the benchmark method. We show the average cost incurred by each method during each hour of the day during the last year of data which we use as testing data. See Figure \ref{fig:hour-comparison}. While the ProjectNet-based method incurs a small increase in cost (less than 5\%) it is significantly faster to train as seen in Table \ref{tab:runtime} and \ref{tab:runtime-elec}. 
}



\subsection{Warcraft Shortest Path}

We use the Warcraft II tile dataset \citep{warcraft} which was first introduced in \citep{Pogancic2020} to test their end-to-end approach for combinatorial problems 
On this dataset, we compare our end-to-end method against their method as well as with a traditional two step predict then optimize method. The task consists of predicting costs of travelling over a terrain map and subsequently determining the shortest path between two points. In particular, each datapoint consists of a terrain map defined by a $12 \times 12$ grid where each vertex represents the terrain with a fixed unknown cost. The forecasting aspect is to determine the vertex weights given such an image, and the optimization aspect is to determine the shortest path from the top left to bottom right vertices. See figure \ref{fig:projectnet path} (top left) for a sample of terrain tiles.

\begin{figure}[b]
    \centering
    \includegraphics[scale=0.4]{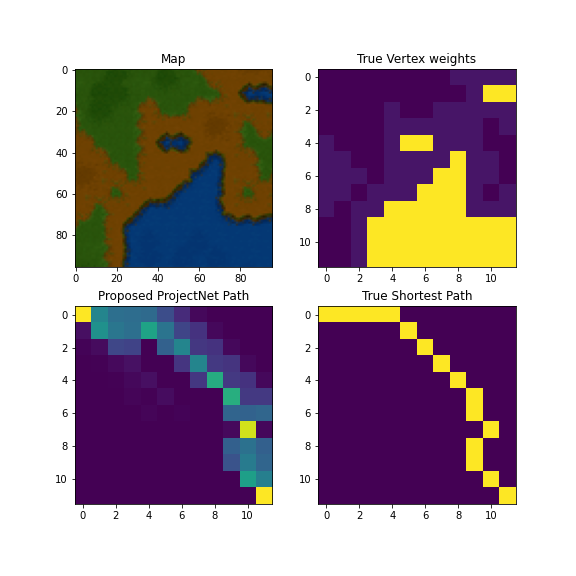}
    \caption{Sample terrain and path proposed by ProjectNet.}
    \label{fig:projectnet path}
\end{figure}

The nominal shortest path problem can be formulated as follows, where $w_{i,j}$ is a variable deciding if edge from node $i$ to node $j$ should be chosen, $u_{i,j}$ is the cost of choosing edge from node $i$ to $j$, and for simplicity $O(i)$ is the set of edges leaving node $i$ and $I(i)$ is the set of incoming edges into node $i$. Finally, the path begins at node $a$ and ends at node $b$:
\begin{equation}
\begin{array}{lll}
     \ w^*(u) = &
    \arg \min_{i,j} & u_{i,j} w_{i,j}  \\ \\
& \text{subject to}
        & \sum_{j \in I(i)} w_{j,i} - \sum_{j \in O(i)} w_{i,j} = 0, \forall i \not= a,b \\ \\
  &      & \sum_{j \in O(a)} w_{a,i} = 1 , \quad  \sum_{i \in I(b)} w_{i,b} = 1 
\end{array}
\end{equation}
Figure~\ref{fig:projectnet path} (bottom left) illustrates an example of the path learned by the ProjectNet method, with the bottom right figure illustrating the true shortest path given perfect knowledge of vertex weights. 

The forecasting task is much more complex in this example than the previous ones. In particular, it is a computer vision problem to learn vertex weights given an image. It is always important to choose the right forecasting model dependent on the problem at hand. A widely used model for computer vision is the so-called residual network \citep{he2016deep}. Traditionally, this is a deep network with 18 layers. 
In contrast, as in \citet{Pogancic2020}, we used only the first 5 layers of the architecture's structure for all experiments. The baseline model is a two-stage predict-then-optimize method which first trains a model by training on minimizing mean-squared error in predicting edge weights, then independently optimizing to find the shortest path. This method performs significantly worse than the end-to-end approaches. Hyperparameters are chosen to be the same as parameters chosen from the original paper that introduced the dataset and the benchmark method.


Comparing as in \citet{Pogancic2020}, we report the percentage of test instances for which various methods found an optimal path in Table~\ref{tab:warcraft results}. We can see the ProjectNet method's accuracy is competitive and more crucially, the running time of our approach is 19\% faster than the end-to-end method of \citep{Pogancic2020}. 

\begin{table}[H]
    \centering
    \caption{Percentage of testing data for which optimal path was found on the warcraft shortest path problem. Runtime reports average running time in seconds per epoch. \vspace{1em} }
    \label{tab:warcraft results}
    \begin{tabular}{lcc}
        \toprule
        Method & Matches & Runtime \\ \midrule 
        ResNet Baseline & 40.2\% & 9.2s \\ 
        \citet{Pogancic2020} & 86.6\% & 81.3s \\ \midrule
        ProjectNet & 83.0\% & 68.3s  \\
            \bottomrule
    \end{tabular}   
\end{table}

{\bf Conclusions}
In this paper we studied the optimization under uncertainty problem. 
The traditional approach for tackling such a problem with uncertainty is the predict then optimize approach (e.g., first perform the prediction tasks, and then use these forecasts as inputs for  downstream  optimization problem). 
Rather in this paper we proposed a tractable end-to-end learning approach. We introduced a novel method to solve the end-to-end learning problem by introducing a novel neural network based method for meta-optimization. 
The proposed approach learns to approximately solve an easier underlying optimization problem. We established analytical results that justify our modelling choices.
Furthermore, we applied this end-to-end learning approach to various supply chain, electricity scheduling, shortest path and maximum matching problems.  We have shown in computational experiments that the ProjectNet method is computationally more efficient than other end-to-end methods while still being competitive in terms of task-based loss against other existing end-to-end methods.

\bibliographystyle{plainnat} 
\bibliography{bibfile} 





\newpage 
\begin{APPENDICES}
\ECHead{\centering{\underline{Appendix}}}

\section{Synthetic experiments}
\label{app:exp}

\subsection{Maximum Matching}
\label{app:matching}
In what follows, we aim to show the improvement of this approach over using a traditional projected gradient method. We present computational results comparing the two approaches on a maximum matching problem. We consider a fully-connected bipartite graph with $n$ nodes in each part, and values $u_{ij}$ assigned to the edge connecting nodes $i$ and $j$ from opposite parts. For the experiment, we use $n = 50$, inducing an optimization problem with $n^2=2,500$ edges/variables. The maximum matching problem is given by
\begin{equation}
\begin{array}{lll}
w^*(u) =& \arg \max_{w} & \sum_{i,j} u_{ij}w_{ij} \\ \\
&\text{subject to} & \sum_{j=1}^n w_{ij} \leq 1, \quad \forall i = 1,\dots,n \\ \\ 
& & \sum_{i=1}^n w_{ij} \leq 1, \quad \forall j = 1,\dots,n \\ \\
&                 & 0 \leq w \leq 1,
\end{array}
\end{equation}
where $u_{ij}$ describes the value of choosing edge from node $i$ to $j$, and $w_{ij}$ represents the variable that decides whether to choose edge $(i,j)$. These variables can be viewed as flow, and the constraints ensure the flow out of a node, or into a node, is at most 1. At optimality, the solution is guaranteed to be integer, and hence equivalent to choosing a single edge.

Finally, note that the formulation has inequality constraints, however our framework was specified using only equality and nonnegativity constraints.  In general, we can transform any problem with inequality constraints $Aw \leq b$ into one with equality constraints as in (\ref{nominal-task}) by adding slack variables $s$: $Aw + Is = b, s \geq 0$. 
We define the projected gradient descent sequence of points $w_{t+1} = \pi(w_t + \eta \cdot u)$, for edge weights $u$.
We also train a ProjectNet model with $T_0 = 5$ iterations, and compare the objective value of its solution for iterations up to $T_1 = 35$ on testing data. See figure~\ref{fig:projectnet vs gd app}. In particular, we measure the average relative regret of decisions. That is, given realized edge weights $u$ and decision $\hat{w}$, the relative regret is the percent difference in objective between the objective of $\hat{w}$ and the optimal decision in hindsight:
  $  (g_u(\hat{w}) - g_u(w^*(u)) / g_u(w^*(u) $.
We see that indeed the ProjectNet method improves consistently in accuracy as the number of iterations $T_1$ is extended from the $T_0$ steps that were used during training.

Note that when compared to the traditional gradient descent approach, the ProjectNet approach performs better using fewer iterations.
It maintains this edge even for steps $T_1 > T_0$, which it has not trained upon. However for larger $T_1$ traditional gradient descent is better able to converge to the optimal solution and it overtakes the ProjectNet method. But for the end-to-end framework it is beneficial to use a smaller number of iterations $T_1$ since this is computationally more efficient, and keeps the gradient $\nabla \hat{w}(u)$ from approaching zero. In the regime of smaller $T_1$, the ProjectNet method also has an advantage in terms of objective function value, with up to $12.5\%$ improvement. See Figure~\ref{fig:improvement-app}. 



\begin{figure}[htb]
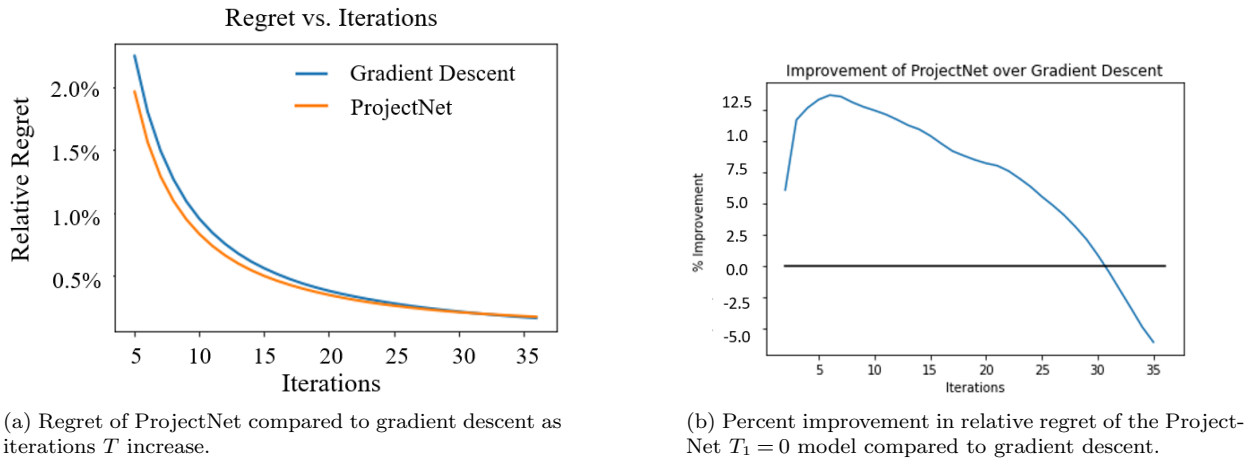

    \centering
\begin{subfigure}{.45\textwidth}
  \centering
    \includegraphics[scale=0.45]{pics/projectnet_vs_gd1.png}
    \caption[size=11pt]{Regret of ProjectNet compared to gradient descent as iterations $T$ increase. 
    }
    \label{fig:projectnet vs gd app}
\end{subfigure}\hfill
\begin{subfigure}{.45\textwidth}
    \includegraphics[scale=0.4]{pics/projectnet_improvement.png}
    \caption{Percent improvement in relative regret of the ProjectNet $T_1 = 0$ model compared to gradient descent. 
    }
    \label{fig:improvement-app}
\end{subfigure}
\caption{
Comparison of ProjectNet to Gradient Descent }
\label{fig:matching-app}
\end{figure}

\label{appendix:newsvendor}
\subsection{Multi-product newsvendor}
\label{sec:multi-product}

As an example, let us consider the multi-product newsvendor problem. 
We are given a total of $K$ products to allocate inventory for. Each one has a local demand realization that is random. There is a holding cost $h_j$ for each product $j=1,\dots,K$ (the cost paid for each unit of stock that remains unsold) and a lost sale cost $b_j$ (the cost for each unit of unmet demand) specific to each product. The objective of this problem is to decide how much inventory of product to allocate. The cost of decision $w$ and realization $u$ is given by 
\begin{equation}
    g_u(w) = \sum_{j=1}^K h_j (w_j - u_j)^+ + b_j(u_j - w_j)^+.
\end{equation}
We additionally impose a constraint on the total amount of stock $C$ that can be stored across all products. Given a known demand $u$, the nominal optimization problem is then given by the following:
\begin{equation}
\begin{array}{lll}
     w^*(u) =& \arg \min_{w \geq 0} &\ g_u(w) \\ \\
& \text{subject to}
        &\ \sum_{j=1}^k w_j \leq C.
\end{array}
\end{equation}
We assume we are given $N = 500$ datapoints $(x^n, u^n), n = 1, \dots, N$ of feature observations $x^n$ and corresponding demand observations $u^n$. Each $x^n$ is generated from a Gaussian distribution with zero covariance and random mean between [-1,1]. Then, we construct a random 2-layer neural network with ReLU activation. Passing in $x^n$ to this network generates the data for $u^n$.
The goal is to learn some forecasting function $f_\theta(x)$ which minimizes the in-sample cost exactly as in equation~(\ref{eq:end-to-end}).
To reiterate,  the key difficulty lies in taking the gradient of $w^*(u)$ with respect to $u$. Hence, we approximate this with our ProjectNet approach.

 We compare against four other methods. (1) A traditional predict-then-optimize approach which only predicts the uncertain parameters, independent of the optimization problem. (2) The OptNet framework of \citet{optnet} also used for end-to-end learning. This approach requires quadratic objectives, hence we add quadratic regularization terms to the objective as described in \citet{Wilder2019}. (3) The traditional sample average approximation (SAA) method which does not incorporate features. And (4) an extension of SAA to use feature information as proposed in \citet{Kallus2020}. In particular, we use a K-nearnest neighbor (KNN) method to determine the weights. Hyperparameters (such as $K$) are chosen by hyperaparameter tuning. {Next we describe the specific formulations that we use for t   hese other methods.}
 

In particular, we reformulate the optimization problem to use with OptNet as follows:
\begin{equation}
\begin{array}{lll}
     w_{\text{optnet}}^*(u) =& \arg \min_{w,p,q \geq 0} &\ h_j \cdot p_j + b_j \cdot q_j + \alpha (\norm{w}^2 + \norm{p}^2 + \norm{q}^2) \\ \\ 
& \text{subject to}
        &\ p_j \geq w_j - u_j , \quad q_j \geq u_j - w_j, \quad \sum_{j=1}^k w_j \leq C 
    \end{array}
\end{equation}
where we introduce variables $p_j,q_j$ to describe the amount of overstocked or understocked units in order to linearize the objective. We also introduce the regularization terms $\norm{w}^2 + \norm{p}^2 + \norm{q}^2$ to problem has nonzero gradient with respect to $u$. We chose a small value of $\alpha = 0.01$ so that it approximates the original problem well (indeed, at $\alpha = 0.01$, $w^*_{\text{optnet}}(u) = w^*(u)$). The training task to learn the forecasting function is now
\begin{equation}
    \min_\theta \sum_{n=1}^N g_{u^n}(w^*_{\text{optnet}}(f_\theta(x^n)))
\end{equation}
\noindent The SAA formulation is given by
\begin{equation}
\begin{array}{lll}
    &\min_{w \geq 0} &\ \sum_{n=1}^N g_{u^n}(w) \\ \\
& \text{subject to}
        &\ \sum_{j=1}^k w_j \leq C.
\end{array}
\end{equation}
Note that this problem does not depend on features $x$. The approach in \citep{Kallus2020} extends this by making use of features. In particular, instead of minimizing over all data $u^n$, we only minimize over the $k$-nearest neighbors to the out-of-sample features $x$. That is, given an out-of-sample point $x$, we calculate the decision
\begin{equation}
\begin{array}{lll}
   & \min_{w \geq 0} &\ z_n g_{u^n}(w) \\ \\
& \text{subject to}
        &\ \sum_{j=1}^k w_j \leq C,
\end{array}
\end{equation}
where $z_n = 1$ if $x^n$ is one of the $k$-nearest neighbors of $x$ and $z_n = 0$ otherwise.

The results of the experiment can be found in Table~\ref{table:runtime}. We observe that the end-to-end methods based on ProjectNet and OptNet takes better advantage of the problem structure to provide lower-cost decisions. Crucially, the end-to-end method based on ProjectNet is computationally more efficient, 10 times faster to train than the OptNet framework which needs to solve the original optimization problem at each iteration. There is a slight increase of at most 5\% in cost due to the nature of approximation of ProjectNet. This gap may potentially be further reduced by more parameter tuning.


\begin{table}[t]
\centering
\begin{tabular}{c c c c c c}
    \toprule 
    \# Products & SAA & Predict-then-optimize & SAA (KNN) & OptNet & ProjectNet  \\ \midrule 
    50 & $<$1s / 20.1 & 1.1s / 19.5 & $<$1s / 19.6 & 73s / 18.7  & 16s / 18.5 \\
    100 & $<$1s / 6.6 & 1.3s / 6.3 & $<$1s / 6.2 &  504s / 5.9 & 52s / 5.6   \\
    \bottomrule 
\end{tabular}
\caption{Running Time and Task-Based Cost Comparison. Left entry of each cell is the running time per epoch (models trained for the same number of epochs until convergence). The right entry is the average decision cost. 
}
\label{table:runtime}
\end{table}



\subsection{Optimality in the No-feature Case}
\label{subsection:newsvendor optimality}
In this example, we consider the case with no feature information in which we make the same single decision for any datapoint. In this particular case, we can find the exact optimal solution and compare against our proposed method using approximate projections. 
The experiment is as follows. Suppose we are given historical data $u^1, \dots, u^N$ of observed demand. Then, we wish to find the single optimal decision
\begin{equation}
\begin{array}{lll}
    & \min_{w \geq 0} &\ \sum_{n=1}^N g_{u^n}(w) \\ \\ 
& \text{subject to}
        &\ \sum_{j=1}^k w_j \leq C,
\end{array}
\end{equation}
through sample average approximation (SAA). Furthermore, SAA is guaranteed to converge to an optimal solution given enough samples from the underlying distribution \citep{shapiro2003monte}.
This problem is generally solved by traditional optimization methods. In this newsvendor case, this can be rewritten as a linear program. However, we may also solve this by gradient descent, ensuring feasibility by approximate projection on the constraint. Our problem becomes
\begin{equation}
    \min_w \sum_{n=1}^N g_{u^n}(\tilde{\pi}(w)),
\end{equation}
where $\tilde{\pi}$ is the approximate projection operator onto the constraints $\{ w \geq 0, \sum_{j=1}^K w_j \leq C \}$.  Experimentally, we find that there is an optimality gap of at most $0.1\%$ of the proposed approach over SAA showing that using approximate projections comes at minimal cost and gives rise to near optimal solutions. See Table~\ref{tab:no-feature} for more details. 

\begin{table}[t]
    \centering
\begin{tabular}{c c c}
    \toprule 
    Capacity & SAA  &  \multicolumn{1}{p{5cm}}{\centering Gradient Descent with \\ Approximate Projection} \\ \midrule 
    10 & 32.528 & 32.53 \\ 
    20 & 9.661 & 9.669 \\ 
    30 & 4.173 & 4.181 \\ 
    \bottomrule 
\end{tabular}
\caption{Comparison of SAA and gradient descent with approximate projections.}
\label{tab:no-feature}
\end{table}

\subsection{The Newsvendor Problem when Costs are Quadratic}

Finally, we consider the newsvendor problem with quadratic costs. In particular, the penalty of over-allocating or under-allocating product scales quadratically. We use an identical setting from \citet{donti2017task} which also considers this problem from an end-to-end perspective. In this setting, similarly to \citet{donti2017task}, we assume that the demand takes values over a discrete set of possible values $d_1, \dots, d_K$. Moreover, we make a distributional forecast $f_\theta(x)_k$ to determine the probability that given observed features $x$, that the demand is $d_k$. Given a single product with a realized demand of $d$ and a supply allocation of $w$, the objective value is given as follows,
\begin{equation}
g_d(w) = c_0 w + \frac{1}{2} q_0 w^2 + c_b (d - w)^+ + q_b ((d - w)^+)^2 + c_h (w - d)^+ + q_h ((w - d)^+)^2
\end{equation}
with different known holding and backorder costs $c_0, q_0, c_h, c_b, q_h, q_b$. Given a distributional forecast $ f_\theta(x) $, the optimal stocking quantity $w$ is given by
\begin{align}
    w^*(f_\theta(x)) = \arg \min_{w \geq 0} c_0 w + \frac{1}{2} q_0 w^2 + \sum_{k=1}^K f_\theta(x)_k  \bigl( &\ c_b (d - w)^+ + q_b ((d - w)^+)^2 + \\ &\ c_h (w - d)^+ + q_h ((w - d)^+)^2 \bigr). \nonumber
\end{align}
Note that this formulation is a piece-wise quadratic optimization problem. 
Finally, given data $(x^1, d^1), \dots, (x^N, d^N)$, one learns $f_\theta(x)$ by the following risk-minimization problem:
$
    \min_{\theta} \sum_{n=1}^N g_{d^n}(w^*(f_\theta(x^n)))$.
As before, we approximate solutions  $w^*(p)$ for a distribution $p$, by using the ProjectNet architecture. We compare our method against the one of \citet{donti2017task} where this same optimization problem was posed. See Table~\ref{tab:quad newsvendor} for a comparison of training times and average decision cost of each method. First, note that as this problem is quadratic, the OptNet framework used in \citet{donti2017task} can exactly calculate the (non-zero) gradients of the problem. But again, we see the ProjectNet method is faster by a factor of 2 and scales better with increasing problem size while achieving  slightly lower cost. 

\begin{table}[H]
    \centering
    \begin{tabular}{c c c c} \toprule
        $K$ (possible demands) & Predict-then-optimize & \citet{donti2017task} & ProjectNet \\ \midrule
         5 & 15.98 / $<$ 0.1s & 13.30 / 4.2s & 13.07 / 2.4s \\ 
         10 &  28.40 / $<$ 0.1s & 26.0 / 9.9s & 25.65 / 4.2s  \\ \bottomrule
    \end{tabular}
    \caption{Performance comparison on quadratic newsvendor. Left side of each column represents average cost on test set, and right side represents average training time over 100 epochs.}
    \label{tab:quad newsvendor}
\end{table}

\section{Visualizing Toy Examples}

In what follows, we will gain some intuition behind the output structure of the ProjectNet approach by considering some low-dimensional toy examples that are as a result easy to visualize. Subsequently in the next section, we will generalize these observations and also provide analytical results. 

First, we investigate the output of the model given various different input cost vectors. Consider the following optimization problem we wish to learn via the ProjectNet:
\begin{equation}
\label{basic}
\begin{array}{lll}
w^*(u) =& \arg \min_{w} & u_1 w_1 + u_2 w_2 \\ \\
&\text{subject to} & w_1 + 2 w_2 \geq 1 , \quad 2w_1 + w_2 \geq 1, \quad w_1,w_2 \geq 0.
\end{array}
\end{equation}
As data, we generate random data $u^1, \dots, u^N$ with each $u^n = [ u^n_1 \ u^n_2]$ being drawn uniformly at random from the unit square. We use this to train a ProjectNet model $\hat{w}$. We then examine the output on ``test'' vectors that are chosen uniformly spaced along a circle. In particular, $M$ cost vectors $u^m$ defined as 
\begin{equation}
    u^m = \paren{\cos \paren{m\cdot \frac{\pi}{2M} }, \sin \paren{m\cdot \frac{\pi}{2M} } }.
\end{equation}
\noindent In Figure~\ref{fig:progression}, we plot the values of $\hat{w}(u^m)$.  
We see that the output of a trained ProjectNet is concentrated around vertices and continuously transitions from one vertex to an adjacent vertex as the cost vector changes. We want solutions to be concentrated around vertices as those are the optimal solutions, but in order to ensure the gradient is nonzero, the property that the ProjectNet's output transitions continuously between vertices is crucial.

Next, we also present the sequence of steps taken at each iteration of the ProjectNet model. In particular, we compare against a traditional projected gradient descent approach. We consider a different optimization problem in three variables:
\begin{equation}
\label{basic2}
\begin{array}{lll}
w^*(u) =& \arg \min_{w} & u_1 w_1 + u_2w_2 + u_3w_3 \\ \\ 
&\text{subject to} & w_1 + w_2 + w_3 = 1, \quad w_1, w_2, w_3 \geq 0.
\end{array}
\end{equation}
Again, we generate random cost vectors and train a ProjectNet model $\hat{w}$ to approximate $w^*(u)$. In Figure~\ref{fig:projectnet vs gd path} we notice that the ProjectNet method is able to converge faster by making use of the learned component of the update rule $L(w_t)$. 

\begin{figure}[t]
  \centering
\includegraphics[scale=0.5]{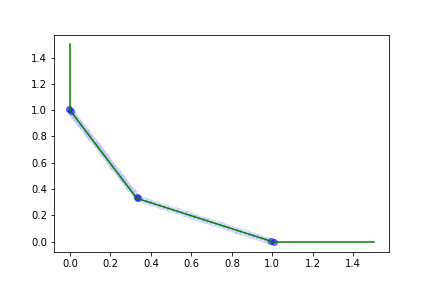} 
\caption{ProjectNet output varies continuously between vertices.}
\label{fig:progression}
\end{figure}

\begin{figure}[tb]
    \centering
\includegraphics[scale=0.55]{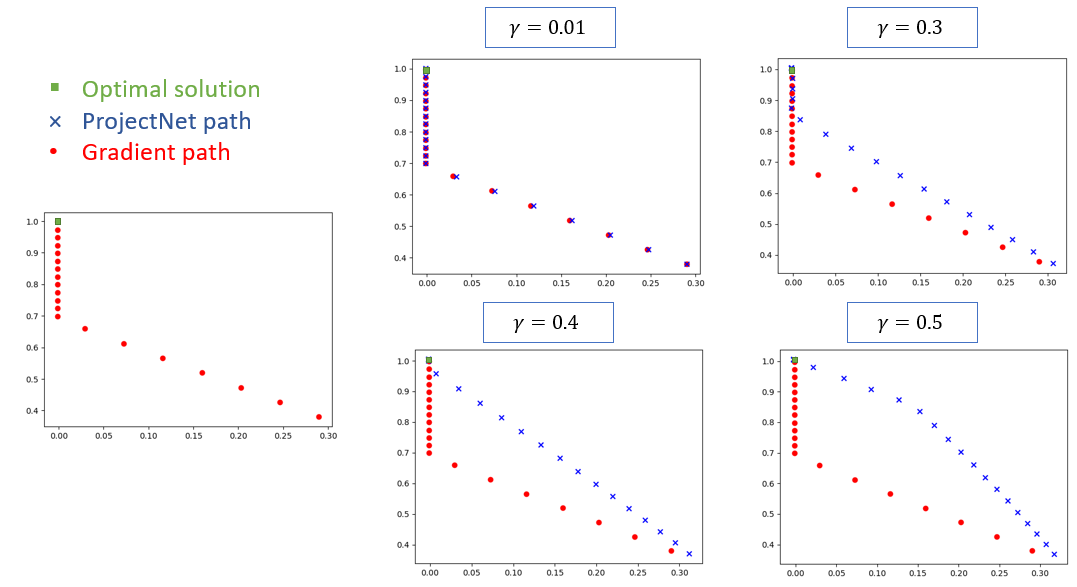} 
\caption{Comparison of the path taken in gradient descent versus the ProjectNet model. 
}
\label{fig:projectnet vs gd path}
\end{figure}

\end{APPENDICES}

\end{document}